\def\datasetsa{mnist,svhn,cifar10,celeba64}
\newcommand{\y}{{\mathbf y}}
\newcommand{\z}{{\mathbf z}}
\newcommand{\x}{{\mathbf x}}
\renewcommand{\Re}{{\mathbb R}}
\newtheorem{claim}{Claim}
\newtheorem{proposition}{Proposition}
\title{Generator Reversal}
\author{
    Yannic Kilcher \\
  ETH Zurich, Switzerland\\
  \texttt{yannic.kilcher@inf.ethz.ch} \\
  \And
    Aur\'elien Lucchi \\
  ETH Zurich, Switzerland\\
  \texttt{aurelien.lucchi@inf.ethz.ch} \\
  \And
    Thomas Hofmann \\
  ETH Zurich, Switzerland\\
  \texttt{thomas.hofmann@inf.ethz.ch} \\
}
\begin{document}

\maketitle

\begin{abstract}
We consider the problem of training generative models with deep neural networks as generators, i.e.~to map latent codes to data points. Whereas the dominant paradigm combines simple priors over codes with complex deterministic models, we propose instead to use more flexible code distributions. These distributions are estimated non-parametrically by reversing the generator map during training. The benefits include: more powerful generative models, better modeling of latent structure and explicit control of the degree of generalization.
\end{abstract}


\section{Introduction}

Continuous latent variable models have been developed and studied in statistics for almost a century, with factor analysis \cite{young1941maximum, bartholomew1987factor} being the most paradigmatic and widespread model family.  In the neural network community autoencoders have been studied classically as  dimension reduction methods \cite{baldi1989neural,demers1993n}, which encode observations into low-dimensional codes allowing  approximate reconstruction via decoder or generator networks. By driving such a generator with randomness \cite{mackay1995bayesian}, one obtains a probabilistic generative model. This line of research has been further developed for deep autoencoders \cite{hinton2006reducing,vincent2010stacked} and belief networks \cite{hinton2006fast,bengio2007greedy}, making use of stacked or layerwise training with a focus on pre-training representations for supervised tasks.

However, generative models have recently moved to the center stage of deep learning in their own right. Most notable is the seminal work on Generative Adversarial Networks (GAN) \cite{Goodfellow:2014td} as well as probabilistic architectures known as Variational Autoencoder (VAE) \cite{Kingma:2013tz,Rezende:2014vm}. Here, the focus has moved away from density estimation and towards generative models that -- informally speaking -- produce samples that are perceptually indistinguishable from samples generated by nature. This is particularly relevant in the context of high-dimensional signals such as images, speech, or text.

Generative models like GANs, VAEs and others typically define a generative model via a deterministic generative mechanism or generator $G_\phi: \Re^k \to \Re^m$, $\z \mapsto G(\z) =\x$, parametrized by $\phi$. They are often implemented as a deep neural network (DNN), which is hooked up to a code distribution $\z \sim \mathcal P_\z$, to induce a distribution $\x \sim \mathcal P_\x$. It is known that under mild regularity conditions, by a suitable choice of generator, any $\mathcal P_\x$ can be obtained from an arbitrary \textit{fixed} $\mathcal P_\z$ \cite{kallenberg2006foundations}. Relying on the power and flexibility of DNNs, this has led to the view that code distributions should be simple and \textit{a priori} fixed, e.g.~$\mathcal P_\z = \mathcal N(\mathbf 0, \mathbf I)$. As shown in \cite{arjovsky2017towards}  for DNN generators, $\text{Im}(G_\phi)$ is a countable union of manifolds of dimension $k$ though, which may pose challenges, if $k<m$. Whereas a current line of research addresses this via alternative (non-MLE or KL-based) discrepancy measures between distributions \cite{Dziugaite:2015wd, nowozin2016f,arjovsky2017wasserstein}, we investigate an orthogonal direction:
\begin{claim}
\label{claim:1}
It is advantageous to increase the modeling power of a generative model, not only via $G_\phi$, but by using more flexible prior code distributions $\mathcal P_\z$.  
\end{claim}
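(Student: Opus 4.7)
The statement is semi-formal (``advantageous''), so my plan is to establish it on two complementary levels: a structural inclusion of modeling classes under a fixed generator budget, and empirical evidence that the extra flexibility translates into a concrete advantage for realistic generators. I would first fix a discrepancy $d(\cdot,\cdot)$ between distributions (say Wasserstein-1 or KL) and a parametric generator family $\{G_\phi\}_{\phi\in\Phi}$, writing the induced model class as $\mathcal{M}(\mathcal{Q}) = \{(G_\phi)_{\#}\mathcal{P}_\z : \phi\in\Phi,\ \mathcal{P}_\z\in\mathcal{Q}\}$, where $\mathcal{Q}$ is an admissible family of code distributions and subscript $\#$ denotes pushforward. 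The claim then takes the form: for $\mathcal{Q}$ strictly containing the singleton $\mathcal{Q}_0=\{\mathcal{N}(\mathbf{0},\mathbf{I})\}$ and natural targets $\mathcal{P}_{\mathrm{data}}$, the best achievable $\inf_{\mu\in\mathcal{M}(\mathcal{Q})} d(\mathcal{P}_{\mathrm{data}},\mu)$ is strictly smaller than under $\mathcal{Q}_0$, and this gap cannot be closed by merely enlarging $\Phi$ within a practical budget.

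The easy half is monotonicity: $\mathcal{M}(\mathcal{Q}_0)\subseteq \mathcal{M}(\mathcal{Q})$ yields the non-strict inequality for free. The substantive step is to exhibit, for a capacity-constrained $\Phi$, a target that makes the inequality strict. I would build this on the fact cited from \cite{arjovsky2017towards} that $\mathrm{Im}(G_\phi)$ is a countable union of $k$-dimensional manifolds: taking $\mathcal{P}_{\mathrm{data}}$ as a mixture of well-separated modes supported on several components of such an image, a connected prior like $\mathcal{N}(\mathbf{0},\mathbf{I})$ must push forward to a distribution whose support is path-connected in $\mathrm{Im}(G_\phi)$, forcing either spurious mass between modes or a quantifiable lower bound on $d$ in terms of the mode gap. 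A non-parametric $\mathcal{P}_\z$ supported on the disjoint preimages of the modes lets the same $G_\phi$ match the target exactly, yielding the strict gap.

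The main obstacle is bridging this worst-case separation to the practical claim that flexibility helps on actual data. I would address it by pairing the theoretical argument with experiments comparing standard GAN/VAE training using $\mathcal{P}_\z=\mathcal{N}(\mathbf{0},\mathbf{I})$ against the same generator architecture trained with the generator-reversal procedure proposed in the paper, measured both quantitatively (e.g.~held-out likelihoods or sample-quality scores) and qualitatively (whether the estimated $\mathcal{P}_\z$ exposes structure such as class clusters that a unimodal Gaussian cannot represent). The hardest part is precisely this bridge: an inclusion argument alone shows only that flexibility cannot hurt, so the separation construction must be complemented by evidence that the potential benefit is realized on natural data by the specific non-parametric estimator proposed.
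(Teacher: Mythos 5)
Your proposal is sound in outline, but it is more formal than what the paper actually does for this claim: the paper never attempts a separation theorem, and explicitly treats Claim~\ref{claim:1} as a thesis to be ``made specific and supported with evidence.'' That evidence is (i) generator reversal applied to a trained GAN reveals that the empirical code distribution of real data is clustered and anisotropic (the t-SNE projections of Figure~\ref{fig:structure:mnist}, with between-/within-cluster distance ratios of $1.9$ for the trained versus $0.1$ for the untrained network), hence poorly matched by an isotropic Gaussian, and (ii) replacing the fixed Gaussian by a KDE estimate of these natural codes inside the training loop (Algorithm~\ref{alg:improved_GAN_training}) yields faster-rising and higher inception scores, better early samples, recovery of dropped modes on MNIST1000, and no loss of generalization (nearest-neighbor distances, holdout likelihood, manifold traversal). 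Your empirical bridge is essentially this same program, so that half coincides with the paper. Your theoretical half --- monotonicity of the induced model class plus a strict separation exploiting the countable-union-of-$k$-manifolds structure of $\text{Im}(G_\phi)$ --- goes beyond the paper, which cites \cite{arjovsky2017towards} only as motivation; it would buy a precise statement the paper lacks, but its key step is delicate: with a continuous $G_\phi$ and a full-support Gaussian prior the model's support is the closure of the entire image, and the spurious mass on bridges between modes can be driven to zero as the generator class grows (and is already nearly invisible to Wasserstein-type metrics), so the strict gap holds only once ``practical budget'' is turned into an explicit capacity or Lipschitz constraint, or once $d$ is chosen to be sensitive to support mismatch. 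As written, the inclusion argument plus the informal topological picture establishes only that flexibility cannot hurt, which is roughly the level of rigor the paper itself settles for before handing the claim over to experiments.
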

Another benefit of this approach is the ability to reveal richer structure (e.g.~multimodality) in latent space via $\mathcal P_\z$, a view which is also supported by evidence on using more powerful posterior distributions  \cite{mescheder2017adversarial}. 

Deep generative models have proven to be notoriously hard to train. The use of complementing recognition networks that operate in a bottom-up fashion and provide approximations to posteriors, has been a common trait of many models from the Helmholtz machine \cite{dayan1995helmholtz} to recent VAEs. In GANs this is avoided by pairing-up the generator with a discriminator in a minimax game. However, GANs can be brittle to train \cite{Radford:2015wf} and the training signal provided by the discriminator can become weak (\textit{vanishing gradients}) or misleading (\textit{mode collapse}). We would like to retain benefits of both approaches and propose an alternative method that does not make use of a recognition network and instead  relies on the generator itself to compute an approximate inverse map $H: \Re^m \to \Re^k$ such that $H \circ G_\phi \approx \text{id}$. 
\begin{claim}
The generator $G_\phi$ implicitly defines an approximate inverse, which can be computed with reasonable effort using gradient descent and without the need to co-train a recognition network. We call this approach generator reversal.
\label{claim:2}
\end{claim}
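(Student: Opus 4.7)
The claim is really a constructive/operational one, so the plan is to exhibit an explicit procedure $H$ and argue, rather than prove in a fully rigorous sense, that (i) it is well-defined, (ii) it approximately inverts $G_\phi$ on the relevant domain, and (iii) its cost is modest compared to training a separate recognition network.

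First, I would define $H$ implicitly through the optimization problem
\begin{equation}
H(\x) \in \argmin_{\z \in \Re^k} \; \ell(G_\phi(\z), \x),
\end{equation}
for a suitable reconstruction loss $\ell$ (e.g.\ squared error). Since $G_\phi$ is a DNN and hence almost-everywhere differentiable in $\z$, the inner objective admits gradients $\nabla_\z \ell(G_\phi(\z), \x)$ computable by a single backward pass through $G_\phi$. Thus $H$ can be realized by plain gradient descent (or a momentum/Adam variant) initialized at some $\z_0$, producing iterates $\z_{t+1} = \z_t - \eta_t \nabla_\z \ell(G_\phi(\z_t),\x)$. The existence of a stationary point of the inner problem on a compact sublevel set, together with standard results on gradient descent for smooth non-convex functions, yields convergence to a critical point, which is the formal content of ``$H$ is well-defined up to gradient descent accuracy.''

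Second, to justify the \emph{approximate inverse} property $H \circ G_\phi \approx \mathrm{id}$, I would fix $\x = G_\phi(\z^\star)$ for some $\z^\star$ and show that $\z^\star$ is a global minimizer of $\ell(G_\phi(\cdot), \x)$ with value zero. Under the assumption that $G_\phi$ is locally injective on an open neighborhood of $\z^\star$ (i.e.\ the Jacobian $\partial G_\phi/\partial \z$ has rank $k$ at $\z^\star$, which is generic when $k < m$), the inverse function theorem ensures that this minimizer is isolated. A local strong-convexity argument then gives linear convergence of gradient descent started sufficiently close to $\z^\star$, quantifying the ``reasonable effort'' part of the statement. For more distant initializations, one can invoke the typical empirical observation that the landscape of $\|G_\phi(\z)-\x\|^2$ for overparametrized networks is benign, but I would frame this as a supporting plausibility argument rather than a formal guarantee.

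Finally, I would compare the per-step cost of reversal (one forward and one backward pass through $G_\phi$ per update) to the amortized cost of training and evaluating an auxiliary encoder network, concluding that the total computational overhead remains within a small constant factor of ordinary generator training. The main obstacle, and the reason the claim is stated informally, is the non-convexity of $\z \mapsto \ell(G_\phi(\z), \x)$: guaranteeing that gradient descent lands in the correct basin of attraction, especially when $G_\phi$ is not injective and multiple preimages exist, cannot be established without further assumptions on $G_\phi$. I would therefore phrase the formal statement as a local result around points in $\mathrm{Im}(G_\phi)$ at which the Jacobian of $G_\phi$ has full column rank, and treat the global behavior as an empirical claim to be validated in the experimental sections of the paper.
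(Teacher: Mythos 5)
Your proposal matches the paper's approach: the paper likewise realizes the inverse as gradient descent on $\ell(\z)=\tfrac12\|G_\phi(\z)-\x\|^2$ (Algorithm 1), proves a local result at true pre-images $\x=G(\z_*)$ via the Hessian $\mathbf J_G(\z_*)\mathbf J_G(\z_*)^\top$ under a local-invertibility assumption, and defers the global behavior to empirical evidence (random-weight reconstruction experiments). Your only refinement is pushing from local convexity to local strong convexity and linear convergence under a full-rank Jacobian, which the paper only gestures at by noting that an $\ell_2$ regularizer yields local strong convexity.
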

Our generator reversal is similar in spirit to~\cite{kindermann1990inversion}, but their intent differs as they use this technique as a tool to visualize the information processing capability of a neural network. Unlike previous works that require the transfer function to be bijective \cite{Baird:jr, Rippel:2013uq}, our approach does not strictly have this requirement, although this could still be imposed by carefully selecting the architecture of the network as shown in~\cite{dinh2016density, arjovsky2017wasserstein}.

Note that, if the above argument holds, we can easily find latent vectors $\z=H(\x)$ corresponding to given observations $\x$.  This then induces an empirical distribution of "natural" codes, which we can combine with the first argument.
\begin{claim}
Using generator reversal we can model natural code distributions in a non-parametric manner, for instance, using kernel density estimation. This can then be incorporated  into an improved learning method for GANs.
\label{claim:3}
\end{claim}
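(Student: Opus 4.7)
The plan is to establish Claim~\ref{claim:3} constructively rather than axiomatically: I will (i) exhibit an algorithm that turns a dataset $\mathcal D = \{\x_i\}_{i=1}^N$ into a non-parametric estimate $\hat{\mathcal P}_\z$ of the natural code distribution, and (ii) describe how to splice this estimate into the usual GAN objective so that the generator and the prior are learned jointly, thereby giving a candidate "improved" learning procedure. Since the second half of the claim is empirical in nature, I will also state what improvement should mean operationally so the claim becomes falsifiable.

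First I would produce the empirical code set. For the current generator $G_\phi$, Claim~\ref{claim:2} lets me compute $\z_i = H(\x_i)$ for each $\x_i \in \mathcal D$ by running gradient descent on a reconstruction loss $\|G_\phi(\z) - \x_i\|^2$ starting from a random initialization. The resulting sample $\{\z_i\}_{i=1}^N \subset \Re^k$ is then smoothed by a standard kernel density estimator
\[
  \hat{\mathcal P}_\z(\z) = \frac{1}{N}\sum_{i=1}^N K_h(\z - \z_i),
\]
with, e.g., a Gaussian kernel of bandwidth $h$; sampling from $\hat{\mathcal P}_\z$ amounts to picking an index $i$ uniformly and perturbing $\z_i$ by kernel noise, so the prior remains trivial to sample from despite being non-parametric.

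Next I would weave this estimate into training. At each outer iteration, I refresh (or warm-start) the inverses $\z_i = H(\x_i)$ for a mini-batch of real samples, reassemble $\hat{\mathcal P}_\z$, draw generator inputs from $\hat{\mathcal P}_\z$ rather than from a fixed $\mathcal N(\mathbf 0, \mathbf I)$, and then perform the usual adversarial update on the generator and discriminator. Because $H$ is defined through the current $G_\phi$, the code distribution co-adapts to the generator, which is precisely the flexible prior promoted by Claim~\ref{claim:1}; conversely, the generator is now trained on codes that are demonstrably in the "natural" region, so the discriminator's gradient is never wasted on regions of code space that the data never visits.

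The hard part will be the joint stability of the two coupled updates. The inverse $H$ is only approximate and drifts whenever $\phi$ is updated, so naive alternation risks oscillation or, worse, a collapse of $\hat{\mathcal P}_\z$ onto a tiny cluster of codes that the generator can trivially memorize. I expect that careful bandwidth selection for $K_h$, a limited number of inversion steps per iteration, and temporal smoothing of the code set (e.g.~a running buffer of $\z_i$) will be necessary to keep $\hat{\mathcal P}_\z$ well-supported. A secondary concern is the curse of dimensionality for KDE in $\Re^k$, which I would mitigate by keeping $k$ modest and by relying on the empirical observation that the $\z_i$ produced by inversion tend to concentrate on a low-dimensional structure. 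The final "improved learning" clause is then settled empirically: I would compare sample quality, mode coverage, and reconstruction error against a matched GAN trained with the fixed prior $\mathcal N(\mathbf 0,\mathbf I)$.
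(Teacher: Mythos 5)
Your construction matches the paper's route almost step for step: invert data points by gradient descent on the reconstruction loss (Algorithm~\ref{alg:rgen}), fit an RBF kernel density estimate to the recovered codes, and then, inside the GAN loop, re-estimate this prior and draw generator inputs from it instead of $\mathcal N(\mathbf 0, \mathbf I)$ (Algorithm~\ref{alg:improved_GAN_training}), with the ``improved learning'' clause settled empirically (the paper uses inception scores, early-training sample quality, manifold traversal, nearest-neighbor and holdout-likelihood checks). The one substantive divergence is how the inner reversal loop is terminated. You propose a fixed or limited number of inversion steps per outer iteration (plus warm starts and a temporal buffer of codes); the paper explicitly rejects a fixed step count or loss threshold as wasteful and instead introduces an adaptive criterion (Claim~\ref{claim:accuracy_level}): run reversal only until an MMD two-sample test can distinguish, in latent space, the reconstructed codes of real data from those of generated samples. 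This ties the reversal accuracy to the adversarial objective itself, yielding crude (cheap) reconstructions early in training and finer ones only as the generator improves, which is how the paper keeps the per-iteration cost of the continual prior re-estimation manageable. Your stability worries are handled differently as well: the paper mitigates overfitting/collapse of the KDE prior by mixing in a broader background distribution and studies the bandwidth empirically via holdout likelihood, rather than by code buffering. Your scheme is a legitimate instantiation of the claim and is simpler to implement, but it lacks the adaptive stopping rule that constitutes the paper's main algorithmic refinement and its answer to the cost of re-estimating $\mathcal P_\z$ at every step.
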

We have presented our main ideas as claims above and the rest of the paper will make these ideas specific and support them with evidence.


\section{Generator Reversal}
\label{sec:generator_reversal}

\subsection{Gradient--Based Reversal}

Let us begin with making Claim \ref{claim:2} more precise. Given a data point $\x$, we aim to compute some approximate code $H(\x)$ such that $(G \circ H)(\x) = G(H(\x)) =: \tilde \x \approx \x$. We do so by simple gradient descent, starting from some random initialization for $\z$ (see Algorithm \ref{alg:rgen}). We will come back to the question of how to define the termination condition.
\begin{algorithm}
    \caption{Generator Reversal}\label{alg:rgen}
    \begin{algorithmic}[1]
        \INPUT Data point $\x$, loss function $\ell$, initial value $\z_0$
        \STATE Initialize $\z \leftarrow \z_0$
        \REPEAT
        		\STATE $\hat{\x} = G_\phi(\z)$  \COMMENT{run generator}
	        \STATE $\z \leftarrow \z - \eta \nabla_\z \ell(\x,\tilde\x)$ \COMMENT{backpropagate error}
        \UNTIL{termination condition}
	\OUTPUT latent code $\z$
    \end{algorithmic}
\end{algorithm}

A key question is whether good (low loss) codevectors exist for data points $\x$. First of all, whenever $\x$ was actually generated by $G_\phi$, then surely we know that a perfect, zero-loss pre-image $\z$ exists. Of course finding it exactly would require the exact inverse function of the generator process but our experiments demonstrate that, in practice, an approximate answer is sufficient.

Secondly, if $\x$ is  in the training data, then as $G_\phi$ is trained to mimic the true distribution, it would be suboptimal if any such $\x$ would not have a suitable pre-image. We thus conjecture that learning a good generator will also improve the quality of the generator reversal, at least for points $\x$ of interest (generated or data).
Note that we do not explicitly train the generator to produce pre-images that would further optimize the training objective. This would require backpropagating through the reversal process which is certainly possible and would likely yield further improvements.

\subsection{Random Network Experiments}
\label{sec:random_network}

It is very hard to give quality guarantees for the approximations obtained via generator reversal. Here, we provide experimental evidence by showing that even a DNN generator with random weights $\phi$  can provide reasonable pre-images for data samples. As we argued above, we believe that actual training of $G_\phi$ will improve the quality of pre-images, so this is in a way a worst case scenario. 

Examples for three different image data sets are shown in Figure~\ref{fig:reconstruct:loss}. Here we show the average reconstruction error as a function of the number of gradient update steps. We observe that the error decreases steadily as the reconstruction progresses and reaches a low value very quickly. We also show randomly selected reconstructed samples in Figure~\ref{fig:reconstruct:img}, which reflect the fast decrease in terms of reconstruction error. After only 5 update steps, one can already recognize the general outline of the picture. This is perhaps even more surprising considering that these results were obtained using a generator with \emph{completely random weights}. A similar finding was also reported in~\cite{he2016powerful} which constructed deep visualizations using untrained neural networks initialized with random weights.

\def\steps{5, 20, 400}
\begin{figure}[t]
\vskip 0.2in
    \begin{minipage}{0.45\textwidth}
\begin{center}
\centerline{
    \includegraphics[width=1.2\columnwidth]{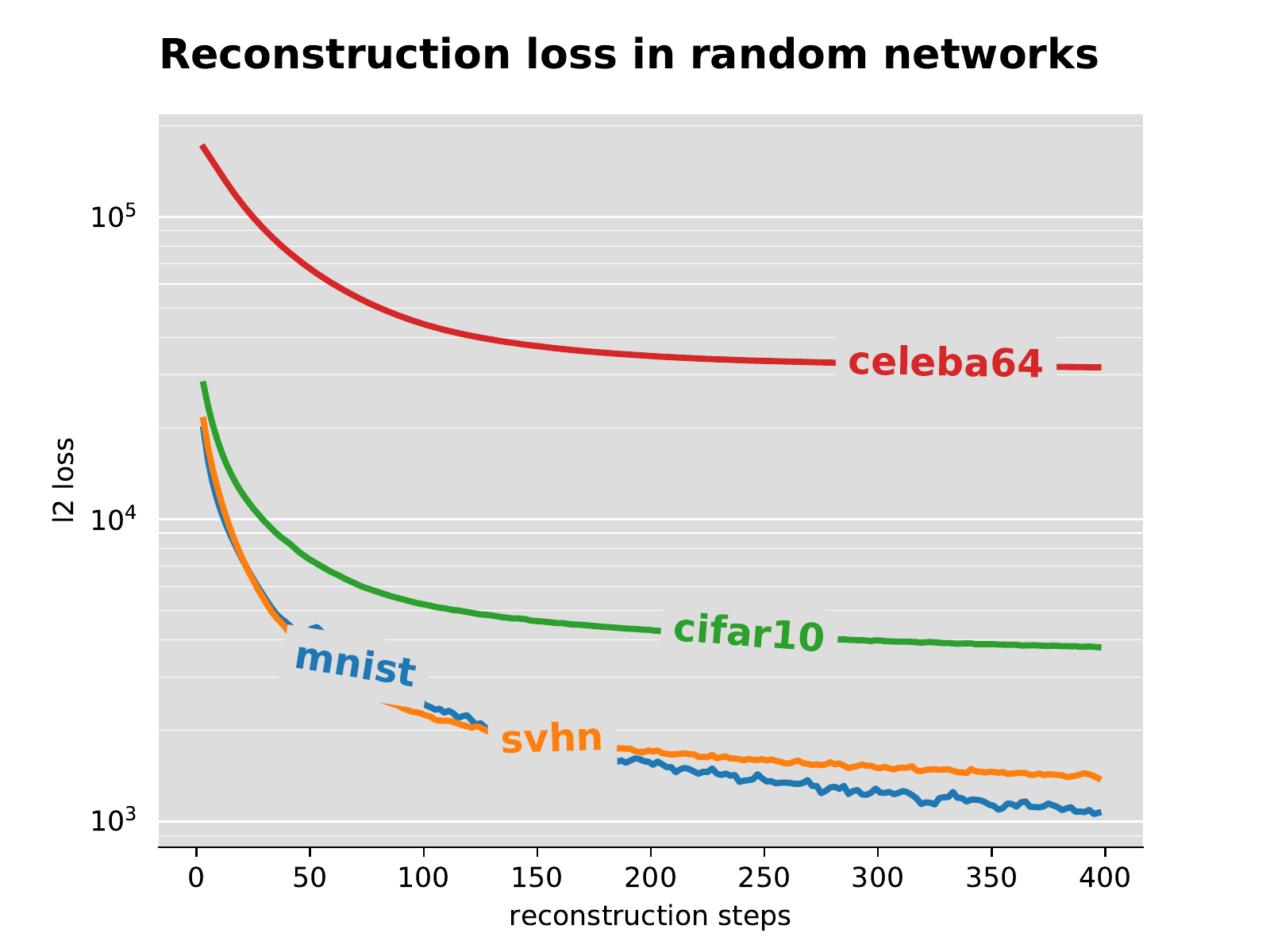}
}
\caption{\small{\it Reconstruction loss in generator networks with random weights.}}
\label{fig:reconstruct:loss}
\end{center}
    \end{minipage}
    \hspace*{1.2cm}
    \begin{minipage}{0.45\textwidth}
\begin{center}
    \vspace*{0.5cm}
\foreach \ds in \datasetsa
{
\centerline{
    \includegraphics[width=\columnwidth/5]{exp/reconstruct/reconstruct_\ds_orig}
    \foreach \stp in \steps
    {
        \includegraphics[width=\columnwidth/5]{exp/reconstruct/reconstruct_\ds_\stp}
    }
}
}
\caption{\small{\it Reconstruction quality using generator networks with random weights. The left column is the original image, followed by reconstructions after 5, 20 and 400 steps.}}
\label{fig:reconstruct:img}
\end{center}
    \end{minipage}
\vskip -0.2in
\end{figure} 

\subsection{Analysis} 

Let us provide a simple insight into generator reversal that ensures that locally, gradient descent will provide the correct pre-image.

\begin{proposition}
Let $G(\z)$ be locally invertible~\footnote{Note that this is a less restrictive assumption than the diffeomorphism property required in~\cite{arjovsky2017towards}}. For a point $\x = G(\z_*)$, the reconstruction problem with the $L_2$-loss is locally convex at $\z_*$.
\end{proposition}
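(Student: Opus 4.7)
The plan is to analyze the loss $\ell(\z) = \tfrac{1}{2}\|G(\z)-\x\|^2$ directly by computing its Hessian and showing it is positive definite at $\z_*$, then extending to a neighborhood by continuity. This is the textbook Gauss--Newton picture: the Hessian of a squared residual decomposes into a ``Gramian'' term in the Jacobian plus a ``curvature'' term weighted by the residual, and at a zero-residual point only the first term survives.

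Concretely, I would first write $r(\z) := G(\z)-\x$ and compute
\begin{equation*}
\nabla_\z \ell(\z) \;=\; J_G(\z)^\top r(\z),
\qquad
\nabla_\z^2 \ell(\z) \;=\; J_G(\z)^\top J_G(\z) \;+\; \sum_{i=1}^m r_i(\z)\,\nabla_\z^2 G_i(\z),
\end{equation*}
assuming $G$ is $C^2$ (which is implicit in the gradient-based algorithm and the local-invertibility hypothesis). Evaluating at $\z=\z_*$, we have $r(\z_*)=0$ because $G(\z_*)=\x$, so the curvature correction vanishes and
\begin{equation*}
\nabla_\z^2 \ell(\z_*) \;=\; J_G(\z_*)^\top J_G(\z_*).
\end{equation*}
This matrix is always positive semi-definite, and it is strictly positive definite precisely when $J_G(\z_*)$ has trivial kernel, i.e.\ full column rank $k$.

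Next I would invoke local invertibility to get full column rank of $J_G(\z_*)$. By the inverse function theorem (in the generalized form for maps $\Re^k \to \Re^m$ that are invertible onto their image in a neighborhood of $\z_*$), local invertibility implies that $J_G(\z_*)$ is injective as a linear map; equivalently it has full column rank, so $J_G(\z_*)^\top J_G(\z_*) \succ 0$. Thus $\nabla_\z^2 \ell(\z_*) \succ 0$. By continuity of $\z \mapsto \nabla_\z^2 \ell(\z)$, positive definiteness persists on some open ball around $\z_*$, which is exactly local (strict) convexity of $\ell$ at $\z_*$.

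The step I expect to require the most care is the invocation of local invertibility to conclude full column rank of the Jacobian: one needs to state what is meant by local invertibility (presumably existence of a continuous local inverse on the image manifold), and one must exclude pathological cases where $G$ is invertible as a set map but its differential is degenerate (as with $z\mapsto z^3$ at the origin). A clean way out is simply to assume $G$ is $C^1$ and that the local inverse is also $C^1$ on $\mathrm{Im}(G)$ near $\x$, so that the chain rule forces $J_G(\z_*)$ to have a left inverse; everything else in the argument is routine.
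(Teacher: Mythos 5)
Your proposal is correct and follows essentially the same route as the paper's proof: write the $L_2$ reconstruction loss, use the Gauss--Newton decomposition of its Hessian, and observe that the curvature term vanishes at $\z_*$ because the residual $G(\z_*)-\x$ is zero, leaving the Jacobian Gramian. In fact your version is the more careful one -- the paper merely notes that local invertibility gives $\mathbf J_G(\z_*)\neq \mathbf 0$ and concludes positive semidefiniteness at the single point $\z_*$, whereas you correctly identify that one needs full column rank of $J_G(\z_*)$ (ruling out cases like $z\mapsto z^3$, e.g.\ by assuming a $C^1$ local inverse) to get positive definiteness, and you extend to a neighborhood by continuity of the Hessian, which is what genuine local convexity actually requires.
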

\begin{proof}
Let $J_G(\z)$ denote the Jacobian of $G(\z)$. We prove the result stated above by showing that the Hessian at $\z_*$ is positive semidefinite.
\begin{align*}
  \ell(\z) 
  &= \frac{1}{2}\|G(\z) - \x\|^2 = \frac{1}{2}\|G(\z) - G(\z_*)\|^2 \\
\Longrightarrow   \nabla_\z \ell(\z) 
   &=  \mathbf J_G(\z) \left(G(\z) - G(\z_*)\right) \\
\Longrightarrow \nabla_\z^2 \ell(\z_*) 
   &=  0 \cdot \nabla_\z^2 G(\z) + \mathbf J_G(\z_*) \mathbf J_G(\z_*)^\top 
\end{align*}
Since $G(\z)$ is assumed to be locally invertible, then $\mathbf J_G(\z) \neq {\bf 0}$ and  the Hessian is therefore positive semidefinite.
\end{proof}

Note that one could also add an $\ell_2$ regularizer to obtain a locally strongly-convex function.


\section{Code Vector Distribution}

\subsection{Latent Code Structure}

With generator reversal at hand, we can now investigate the empirical distribution of code vectors for data samples. Basically, what we want to provide evidence for, is that a (GAN-)trained generator retains interesting structure in the latent code space, while this is not the case for a  generator with random weights. Moreover, we also want to stress, that there is far more structure in the latent space of the trained model than what an isotropic Gaussian is able to capture.  We use data with class labels to be able to assess how much of the class information is preserved in the latent codes.  Note, however, that we have not used the labels in any way during training. 

Results on MNIST are shown in Figure~\ref{fig:structure:mnist}, where we have projected latent representations from a trained network to a 2-dimensional space using t-SNE~\cite{maaten2008visualizing} and then colored according to their respective class label. Compared to an untrained network, there is a clear emergence of structure in the latent space which we can attribute to the GAN training procedure.

We believe these results to be paradigmatic in that they show that we should not assume that the latent space density always varies smoothly in all directions, nor that latent space directions are necessarily picking up on independent modes of variation. Even if we hard-wire such simplistic assumptions into the model (as we do in GANs), generator reversal reveals that the true code distribution of natural data can be quite different. This hints at a mismatch between modeling assumptions and natural data  distributions, which can be most directly overcome by increasing the model flexibility with regard to $\mathcal P_\z$.  By off-loading some of the modeling complexity from the mechanism $G_\phi$ to the driving distribution, the generator's challenge becomes easier, as anisotropic and clustered distributions do not have to be converted back into multivariate normal densities. This way, one can learn more meaningful representations and ultimately better generative models.  

\subsection{Kernel Density Estimation} 

We propose to use a non-parametric method to estimate a density in the space of latent code vectors. We use kernel density estimation (KDE) as it is commonly used in the literature (see e.g.~\cite{nowozin2016f}). We can then use the resulting density to drive the generation of new examples, controlling the generalization of the model vs.~pure memorization by the bandwidth of the kernel. More concretely, given a translation-invariant kernel $k$, bandwidth $h$, and a sample of codes $\{\z_i\}_{i=1}^n$, we can approximate the probability density at point $\z$ as
$\hat p(\z) = \frac{1}{nh}\sum_{i=1}^n k \left(\frac{\z-\z_i}{h}\right)$.
In this work, we chose a kernel density estimator with an RBF kernel. Note that sampling from this distribution is straightforward.
This approach can also be combined with mixed sampling from a broader background distribution in order to prevent overfitting to the training data.

\section{Improved GAN Training}
\label{sec:improved_GAN}

What we have done so far is to introduce a technique to estimate a flexible prior over the latent codes given by a generator $G_\phi$. This technique revealed a clear structure in the latent space as suggested by the results shown in Figure~\ref{fig:structure:mnist}.
Yet this information is typically ignored by the standard training procedure for GANs, which we review below.

\paragraph{Standard GAN training.}
The main idea behind GANs is to pit two networks - a generator and a discriminator - against each other. The generator $G_\phi$ takes as input a random noise vector $\z$ drawn from $\mathcal P_\z = \mathcal N(\mathbf 0, \mathbf I)$, and outputs a sample $\hat{\x} = G_\phi(\z)$. The other network, the discriminator $D$ attempts to differentiate the generator sample $\hat{\x}$ from samples drawn from the true distribution. Formally, the two networks play the following minimax game:
\begin{align}
\min_{G_\phi} \max_D \; & \mathbb{E}_{\x\sim \mathcal P_\x}\left[\log D(\x)\right]
+ \mathbb{E}_{\z\sim \mathcal P_\z}\left[\log (1-D(G_\phi(\z)))\right].
\label{eq:GAN_objective}
\end{align}

In~\cite{Goodfellow:2014td}, the authors suggested minimizing Eq.~\ref{eq:GAN_objective} by alternatingly optimizing the parameters of $G_\phi$ and $D$ using minibatch stochastic gradient descent (SGD) training. Under the assumptions that $G_\phi$ and $D$ have enough capacity, and that at each step of SGD, the discriminator is allowed to reach its optimum then the generator will match the data distribution. However these assumptions often do not hold in practice and consequently many problems arise when training GANs. A common problem encountered in practice is the imbalance  between the discriminator and the generator. To overcome this difficulty, one can either make the discriminator weaker or make the generator stronger. Most existing work focusing on
weakening the discriminator~\cite{Salimans:2016wg, sonderby2016amortised} has faced difficulties to find the right level by which one should "dumb-down" the discriminator. Here we pursue the direction of making the generator stronger by being able to stay closer to the true data distribution. 

\paragraph{GAN training with a flexible prior.}
We now describe a novel training procedure for GANs that uses the reversal technique presented in Section~\ref{sec:generator_reversal} to continually reconstruct latent representations of data samples. By doing this, we obtain the real data's empirical latent distribution. We will later demonstrate that this yields significant speeds up at training time. The method we propose is detailed in Algorithm~\ref{alg:improved_GAN_training}. It requires constructing and continuously updating a prior $\mathcal P_\z$ durng GAN training. Although this might seem like a costly procedure, we will demonstrate that relatively few gradient steps in the generator reversal process are required at each step of the GAN training loop.

\begin{algorithm}
    \caption{Improved GAN training}\label{alg:improved_GAN_training}
    \begin{algorithmic}[1]
        \FOR {$i = 1:T$}
        \STATE Estimate $\mathcal P_\z$ using the generator reversal process
        \STATE Sample minibatch of noise samples from $\mathcal P_\z$
		\STATE Sample minibatch of examples from $\mathcal P_\x$
        \STATE Update the discriminator and generator using SGD
        \ENDFOR
    \end{algorithmic}
\end{algorithm}

Ideally, we would like to perform enough steps such that the obtained latent representations is informative enough for the training procedure. This inevitably raises the question of what would be a good heuristic to determine the quality required from generator reversal. Perhaps the most naive way would be to use a fixed number of steps or fix a threshold on the reconstruction loss. However these two approaches might waste computation as they disregard the GAN objective presented in Eq.~\ref{eq:GAN_objective}. We instead argue for a different criterion:

\begin{claim}
The minimum level of accuracy required for the obtained latent representations to be useful is determined by the possibility to differentiate - in the latent space - generated samples from real data samples.
\label{claim:accuracy_level}
\end{claim}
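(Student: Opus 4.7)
The plan is to interpret ``usefulness'' operationally: the reversed codes $\{H(\x_i)\}_{i=1}^n$ are useful for GAN training exactly as long as their empirical distribution $\hat{\mathcal P}_\z$ carries information not already present in the prior being used by the generator. Equivalently, at each training round I would compare two latent-space populations---reversed codes of real data, $Z_\text{real} = \{H(\x_i)\}$, and reversed codes of fresh generator samples, $Z_\text{gen} = \{H(G_\phi(\z_j))\}$ with $\z_j \sim \hat{\mathcal P}_\z$---and argue that if the two populations are statistically indistinguishable, then feeding $\hat{\mathcal P}_\z$ back into the GAN loop provides no additional training signal. This turns the vague notion of ``enough accuracy'' into a concrete two-sample comparison.

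First, I would unfold the training signal via the chain rule: the extra gradient that $G_\phi$ receives by using $\hat{\mathcal P}_\z$ rather than the incumbent prior depends only on how the distribution of $D(G_\phi(\z))$ for $\z \sim \hat{\mathcal P}_\z$ differs from $D(G_\phi(\z))$ under that incumbent prior. Since $D \circ G_\phi$ is a deterministic mapping applied to both inputs, it suffices to control the latent distributions themselves. Any separation a latent-space classifier can detect translates, via $\nabla_\phi \mathcal L \supset \mathbf J_{G_\phi}^{\top} \nabla_{\x} D$, into a non-vanishing training update, while latent-space indistinguishability collapses this term.

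Second, I would quantify the threshold using an auxiliary binary classifier $D_\z$ trained to separate $Z_\text{real}$ from $Z_\text{gen}$; once its accuracy approaches chance, further iterations of Algorithm~\ref{alg:rgen} cease to change the prior in a way that affects the objective in Eq.~\ref{eq:GAN_objective}, yielding the termination criterion claimed in Claim~\ref{claim:accuracy_level}. The main obstacle is that ``useful'' is not a single scalar: since the GAN objective is a non-convex saddle-point problem, the whole argument hinges on choosing latent-space separability as the \emph{right} surrogate, and on justifying that it is a sufficient statistic for the learning signal generated by Algorithm~\ref{alg:improved_GAN_training}. A secondary difficulty is controlling finite-sample noise from the KDE and the inherent non-uniqueness of reversal codes produced by Algorithm~\ref{alg:rgen}: both can spuriously inflate measured separability even when the underlying code distributions agree, so any quantitative stopping rule needs explicit concentration and invariance arguments to be rigorous.
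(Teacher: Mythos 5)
Your core move---operationalizing ``useful'' via a latent-space two-sample comparison between reversed codes of real data and codes associated with generated samples---is exactly the route the paper takes: it frames the criterion through the minimax principle and implements it as an MMD two-sample test (Eq.~\ref{eqn:mmd}) between the reconstructed latent representations of a real minibatch and of a generated minibatch, evaluated after each step of Algorithm~\ref{alg:rgen}. Your substitution of an auxiliary binary classifier $D_\z$ for the kernel MMD is a legitimate variant (classifier two-sample tests are standard), though it would be considerably more expensive, since the test must run at every reversal step inside every iteration of Algorithm~\ref{alg:improved_GAN_training}, whereas the quadratic MMD estimator on a minibatch is cheap and threshold-friendly. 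Note also that the paper never attempts the formal ``sufficient statistic for the training signal'' argument you sketch; Claim~\ref{claim:accuracy_level} is supported heuristically (by analogy with the discriminator's role in Eq.~\ref{eq:GAN_objective}) and empirically (Figure~\ref{fig:mmd} and the inception-score curves), so your chain-rule justification is extra ambition rather than a reconstruction of the paper's reasoning---and as sketched it is not actually established.

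The genuine gap is the direction of your stopping rule. The paper terminates the reversal loop once the two latent populations \emph{can} be confidently distinguished (MMD exceeds a threshold): at the start of reversal both populations are essentially the random initializations, hence indistinguishable, and distinguishability \emph{increases} with reconstruction quality; reaching the threshold is precisely the ``minimum level of accuracy'' in Claim~\ref{claim:accuracy_level}, which yields crude reconstructions early in GAN training and progressively finer ones later. You propose the opposite: terminate when the classifier's accuracy drops to chance, arguing that indistinguishability means no further training signal. Applied to the inner loop of Algorithm~\ref{alg:rgen}, that criterion would fire immediately at step zero (before any structure has been recovered) and would never be reached later in the reversal, since Figure~\ref{fig:mmd} shows the discrepancy growing monotonically with reversal steps. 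What you are describing is closer to an outer-loop convergence diagnostic for the GAN itself (real and generated data eventually matching), conflated with the inner-loop question the claim actually addresses, namely how accurate the reversed codes must be before they are informative enough to use. To repair the proposal you would need to flip the criterion: stop reversing as soon as the latent representations are accurate enough to expose the real/generated discrepancy, not once that discrepancy has vanished.
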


Our claim closely follow the underlying principle of the minimax game where the discriminator tries to differentiate between real samples drawn from $\mathcal P_\x$ and samples generated by the generator $G_\phi$.

Following this principle we suggest using a statistical test to determine if, in the latent space, samples from the generator are drawn from the same distribution as $\mathcal P_\x$. As demonstrated by our experiments, this will naturally lead to crude reconstructions at the beginning of training, while increasing the reconstruction quality as the generator generates samples more similar to the true samples. In the following, we detail our choice of a specific statistical test.

\paragraph{Two-sample test.}

The two-sample problem, also known as homogeneity testing, is a classic problem in statistics.
Given two i.i.d.\ samples $\{x_i\}_{i=1}^m \sim p$ and $\{y_i\}_{i=1}^m \sim q$, one would like to test the hypothesis $\mathcal{H}_1: p \neq q$ (i.e. the two samples are drawn from different distributions) against the null hypothesis $\mathcal{H}_0: p = q$.
This is equivalent to testing whether $\gamma(p, q) > 0$ for a metric $\gamma$ on the space of probability measures. In this work, we choose $\gamma$ to be the \emph{Maximum Mean Discrepancy (MMD)}~\cite{Gretton:2012wt} which has been applied to the setting of unsupervised learning with GANs in~\cite{Sutherland:2016wi}.

Given two pairs of independent random variables $(\x, \x') \sim p$ and $(\y, \y') \sim q$ and a characteristic kernel \cite{Fukumizu:2007wz} $k$ with associated RHKS $\mathcal{H}$, the squared population MMD is defined as
\begin{equation}
\text{MMD}^2\left[k, p, q\right] = \mathbb{E}_{\x,\x'}\left[k(\x,\x')\right] + \mathbb{E}_{\y,\y'}\left[k(\y,\y')\right] - 2\mathbb{E}_{\x,\y}\left[k(\x,\y)\right].
\end{equation}

As shown in~\cite{Gretton:2012wt} various unbiased estimators of $\text{MMD}^2\left[k, p, q\right]$ can be computed. We will here use a quadratic estimator defined as
\begin{equation}
    \text{MMD}_u^2\left[k, p, q\right] = \frac{1}{m(m-1)}\sum_{i\neq j}^m \left[ k(x_i, x_j) + k(y_i, y_j) - k(x_i, y_j) - k(x_j, y_i) \right].
    \label{eqn:mmd}
\end{equation}

The time complexity for computing this estimator is quadratic in the number of samples $\mathcal{O}(m^2)$, but linear time $\mathcal{O}(m)$ approximations exist~\cite{Gretton:2012wt}.  In this work, we choose a Gaussian (RBF) kernel for $k$, since it can be efficiently computed and fulfils the requirements of a characteristic kernel.

\paragraph{MMD as a stopping criterion.}
We now detail how MMD is used in Algorithm~\ref{alg:improved_GAN_training} as a stopping criterion. Following claim~\ref{claim:accuracy_level}, what we would like is to check whether the distribution produced by the generator network in the latent space $\mathcal Q_\z$ is a good fit for the latent distribution of the true data $\mathcal P_\z$. As evaluating the whole distribution would be too expensive, we instead estimate the MMD distance on minibatches of samples from $\mathcal P_\z$ and $\mathcal Q_\z$ using the estimator in Eq.~\ref{eqn:mmd} after each step of generator reversal.
Once we can confidently distinguish the reconstructed latent representations of the true data from those of the generated data, we terminate the reversal procedure.

Figure~\ref{fig:mmd} shows the maximum mean discrepancy between the obtained latent representations of a data mini-batch and the obtained latent representations of a mini-batch of generated samples as a function of reconstruction steps taken.
The increase in MMD matches our expectations: with better reconstructions, we can more confidently distinguish the data's latent representations from those of the generated samples.
Further, the graphs show that the MMD continues to increase even after a large number of steps.

\begin{figure}[t!]
    \begin{minipage}{0.45\textwidth}
\begin{center}
\centerline{
    \includegraphics[width=\columnwidth/2]{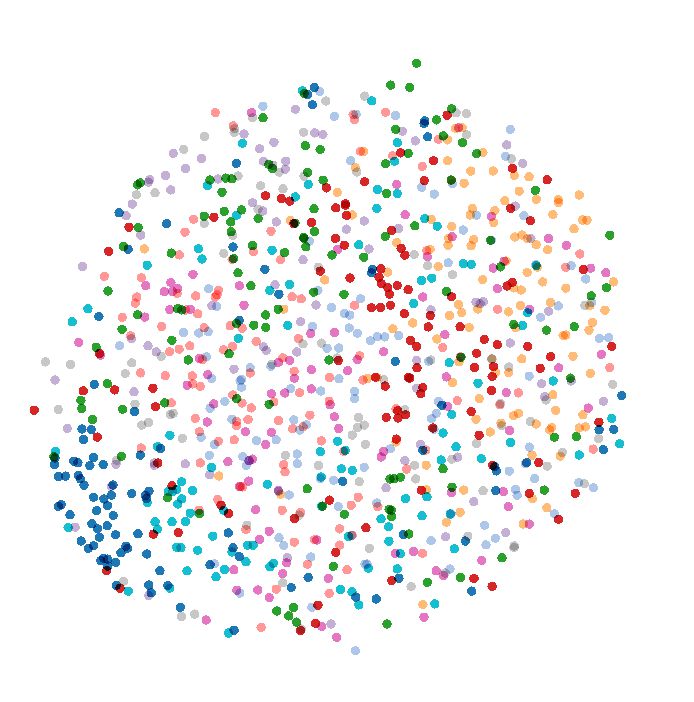}
    \includegraphics[width=\columnwidth/2]{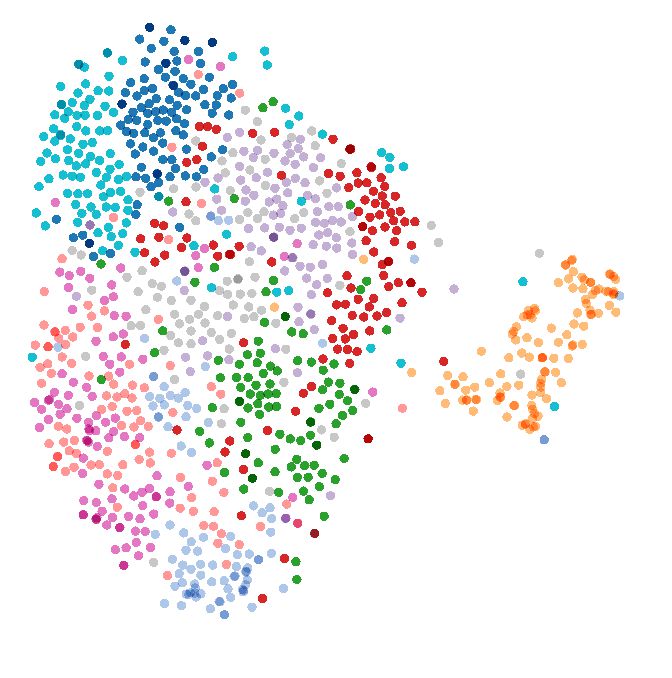}
}
\caption{\small{\it Generator reversal on a sample of 1024 MNIST digits. Projections of data points with 
an untrained (left) and a fully trained GAN (right). Colors represent the respective class labels. The ratios of between-cluster distances to within-cluster distances are 0.1 (left) and 1.9 (right). }}
\label{fig:structure:mnist}
\end{center}
    \end{minipage}
    \hspace*{1.2cm}
    \begin{minipage}{0.45\textwidth}
\begin{center}
\centerline{
    \includegraphics[width=0.9\columnwidth]{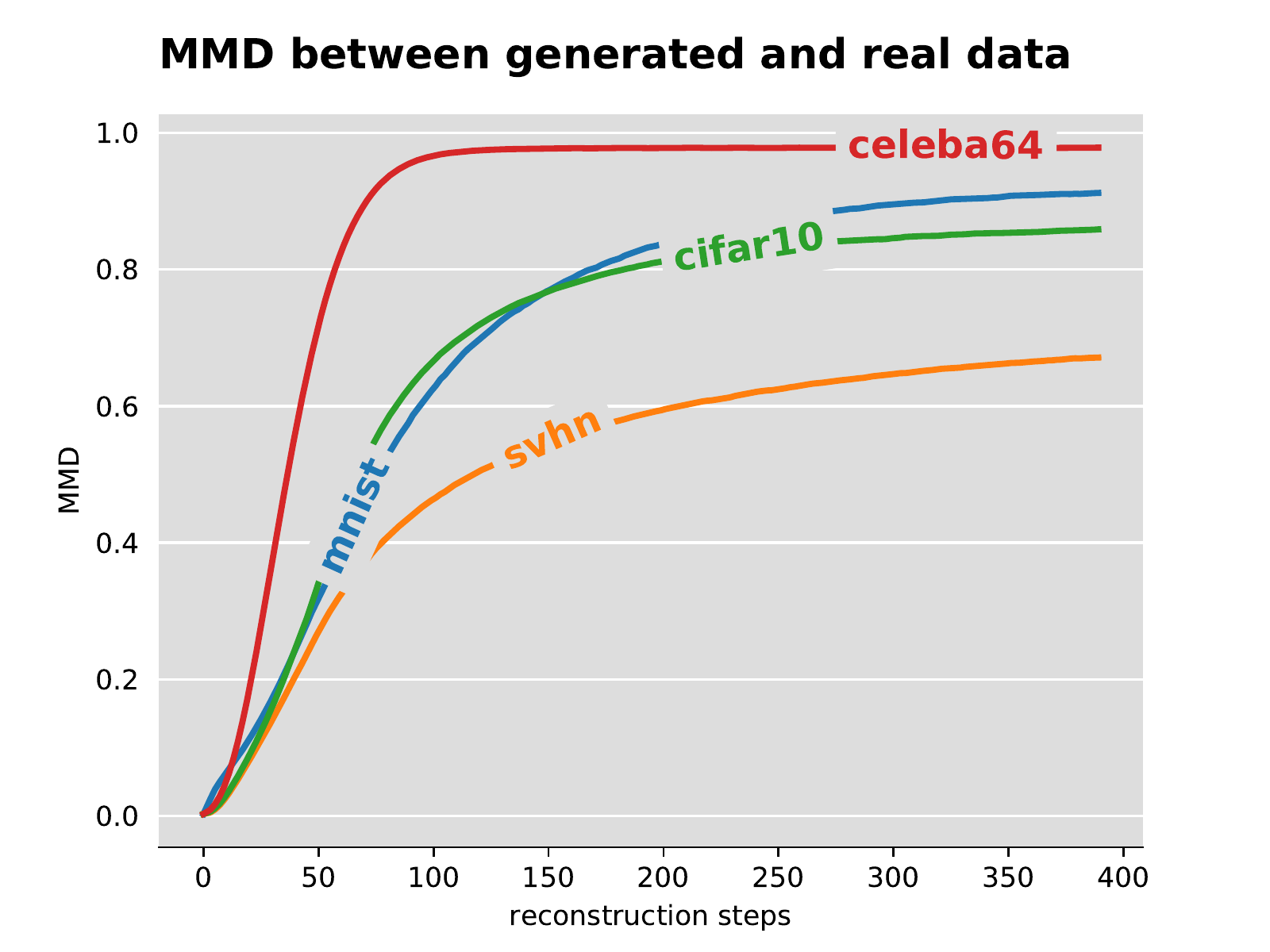}
}
\caption{\small{\it MMD in the latent space between a mini-batch of real data samples and a mini-batch of samples produced by the generator.}}
\label{fig:mmd}
\end{center}
    \end{minipage}
\end{figure} 

\paragraph{Experimental results.}
Our experimental setup closely follows popular setups in GAN research and is detailed in the Appendix.

Figure~\ref{fig:kde} highlights two important results from our experiments:
In the top row we plot the inception score~\cite{Salimans:2016wg} on a test set over the course of training.
As can be seen, our method reaches higher scores more quickly and retains this advantage until the end of training.

In the bottom row we plot the number of reconstruction steps that are taken for each training step. Recall that this is determined by the MMD two-sample test.
As expected, the number of steps needed to produce useful reconstructions increases during training, if only slightly.
More details and results of the training procedure can be found in the Appendix.

\paragraph{Visual results.}
Figures~\ref{fig:samples_progress} and~\ref{fig:samples_full_model} compare samples generated from a standard GAN and a KDE GAN as training progresses and at the end of training. The images in Figure~\ref{fig:samples_progress} clearly demonstrate the faster progress of KDE GAN which generates visually better samples in the early stage of the optimization procedure. The samples generated by the fully trained model in Figure~\ref{fig:samples_full_model} also appear to be of superior quality.

\paragraph{Manifold traveral.}
Given two seeding images from the dataset, we linearly interpolate between their latent representations, which we obtain by generator reversal. The resulting images in Figure~\ref{fig:traversal} show that we learn a smooth transition function between the data space and the latent space. This indicates that the better performance of KDE GAN is not at the detriment of overfitting to the training data, since the model has a notion of semantically neighbouring images. This is also confirmed by our neighborhood exploration experiments, given in the Appendix.

\begin{figure*}[t!]
\begin{center}
\centerline{
    \includegraphics[width=0.25\columnwidth]{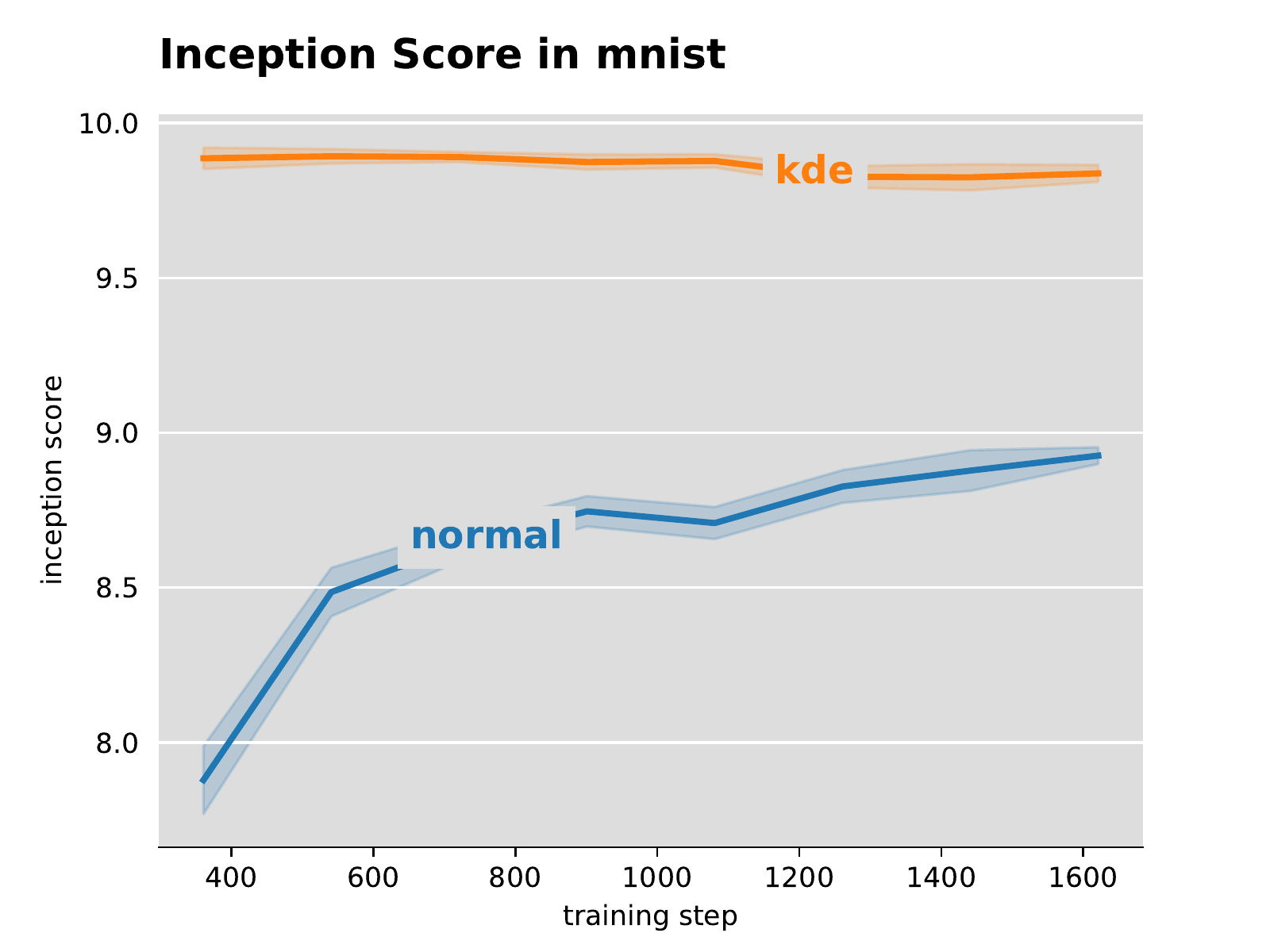}
    \includegraphics[width=0.25\columnwidth]{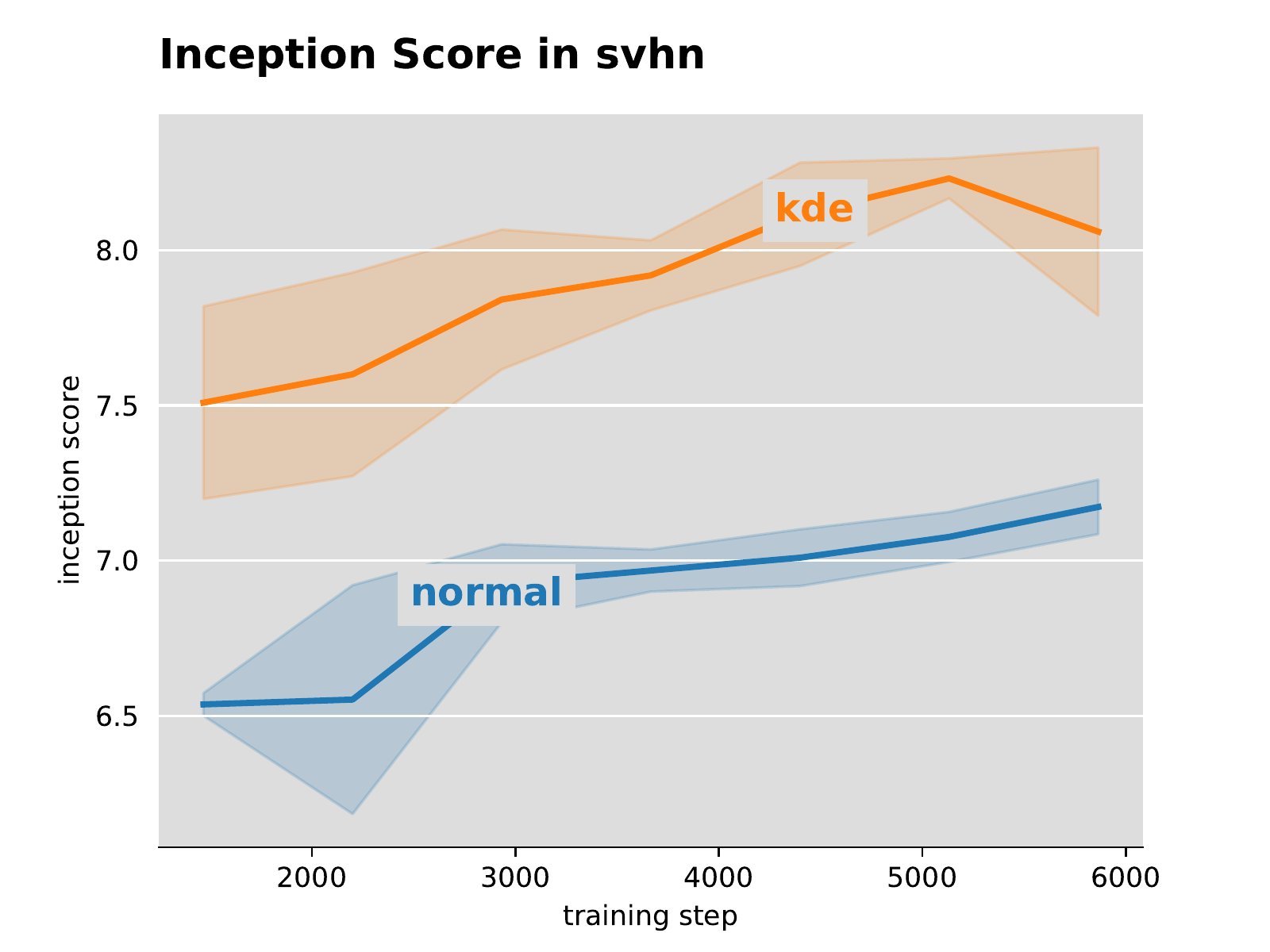}
    \includegraphics[width=0.25\columnwidth]{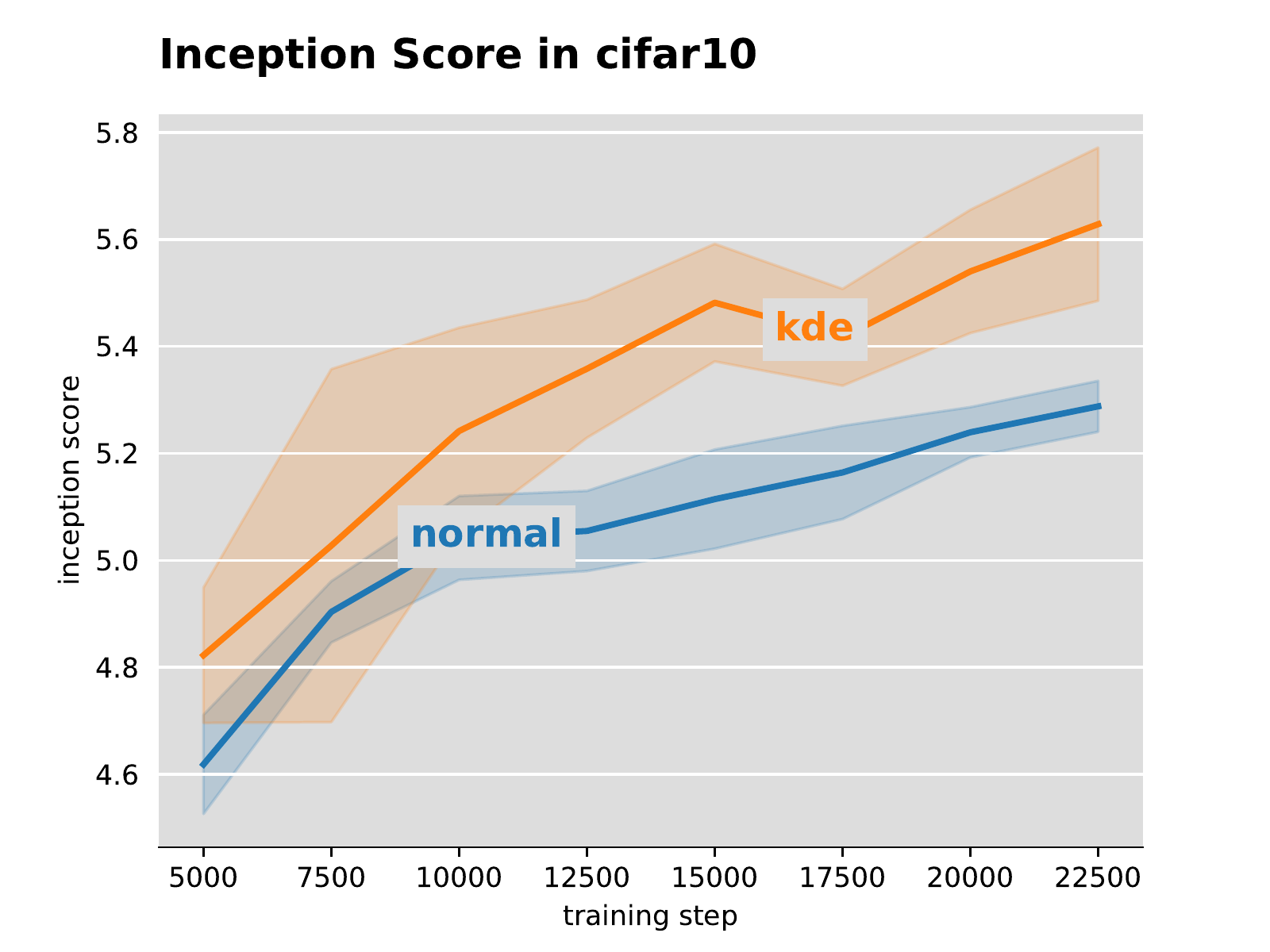}
    \includegraphics[width=0.25\columnwidth]{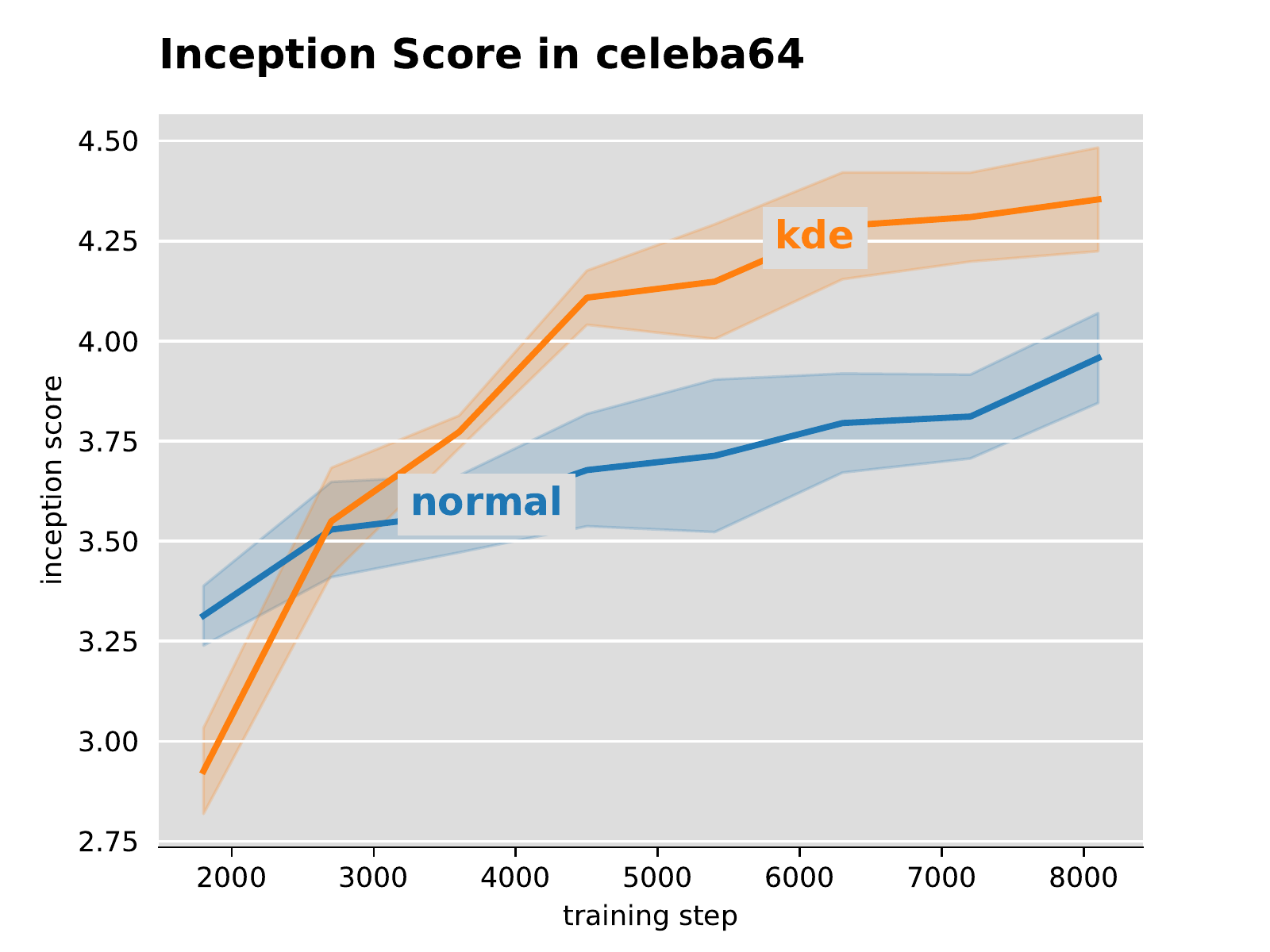}
}
\centerline{
    \includegraphics[width=0.25\columnwidth]{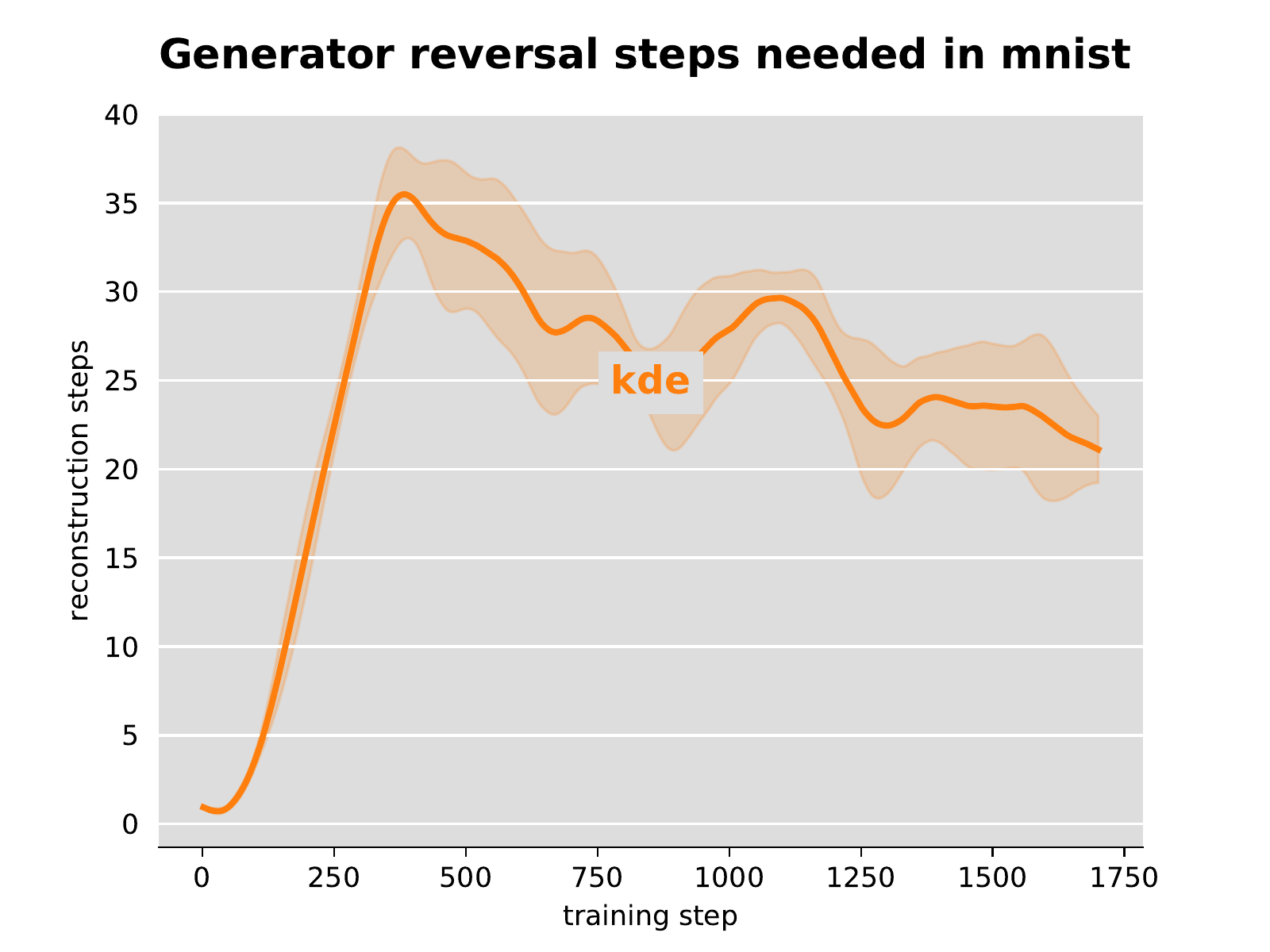}
    \includegraphics[width=0.25\columnwidth]{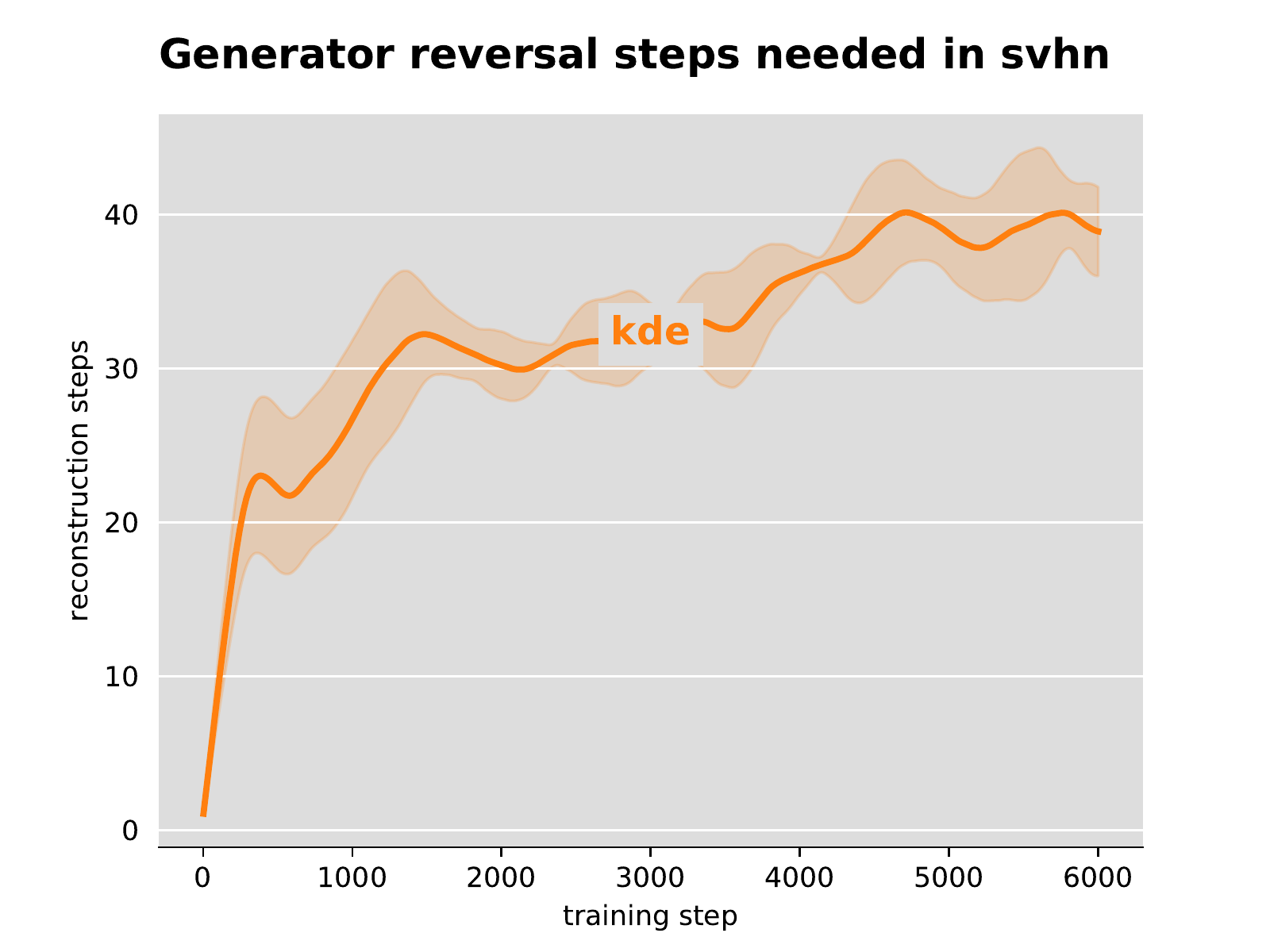}
    \includegraphics[width=0.25\columnwidth]{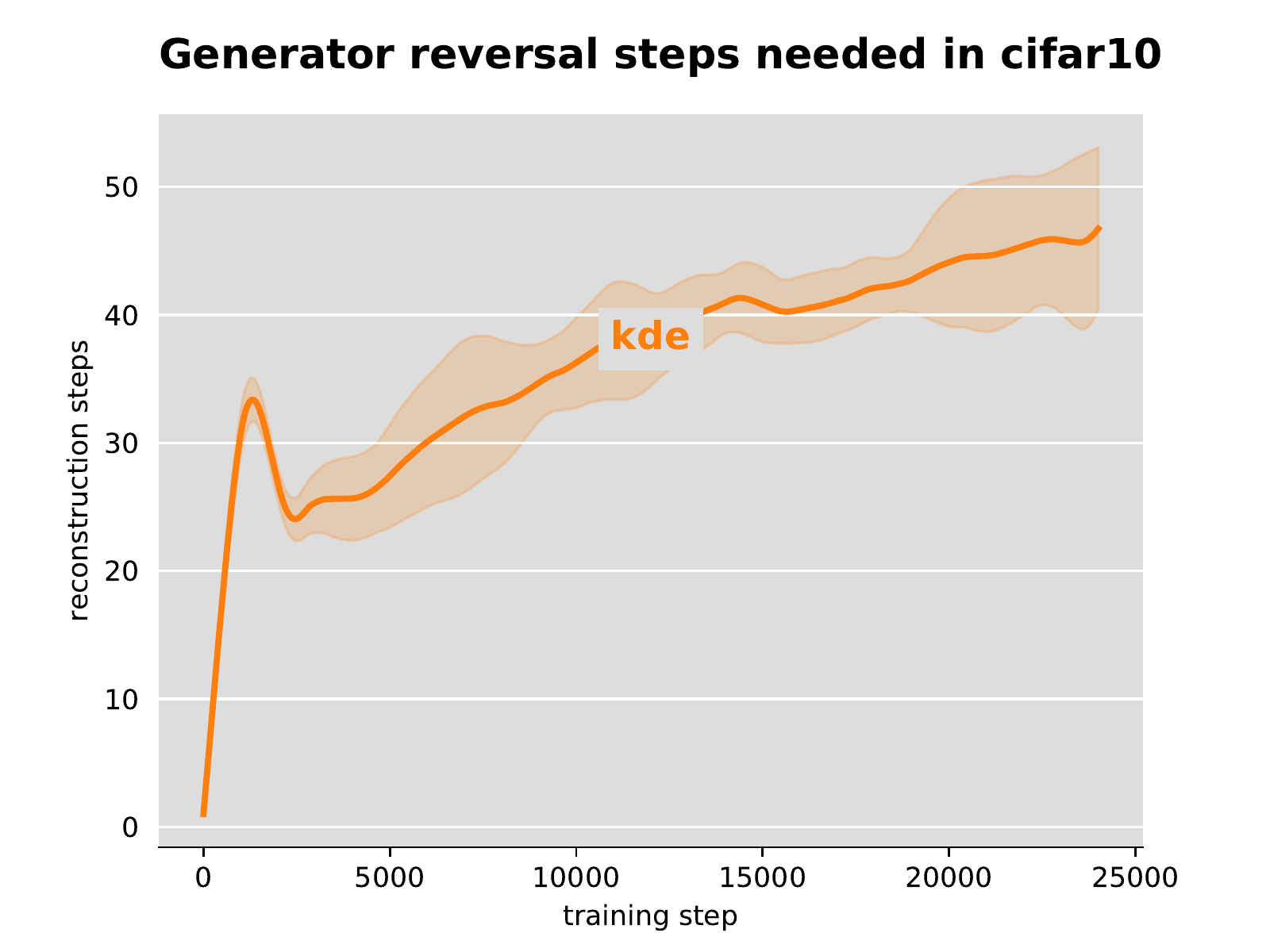}
    \includegraphics[width=0.25\columnwidth]{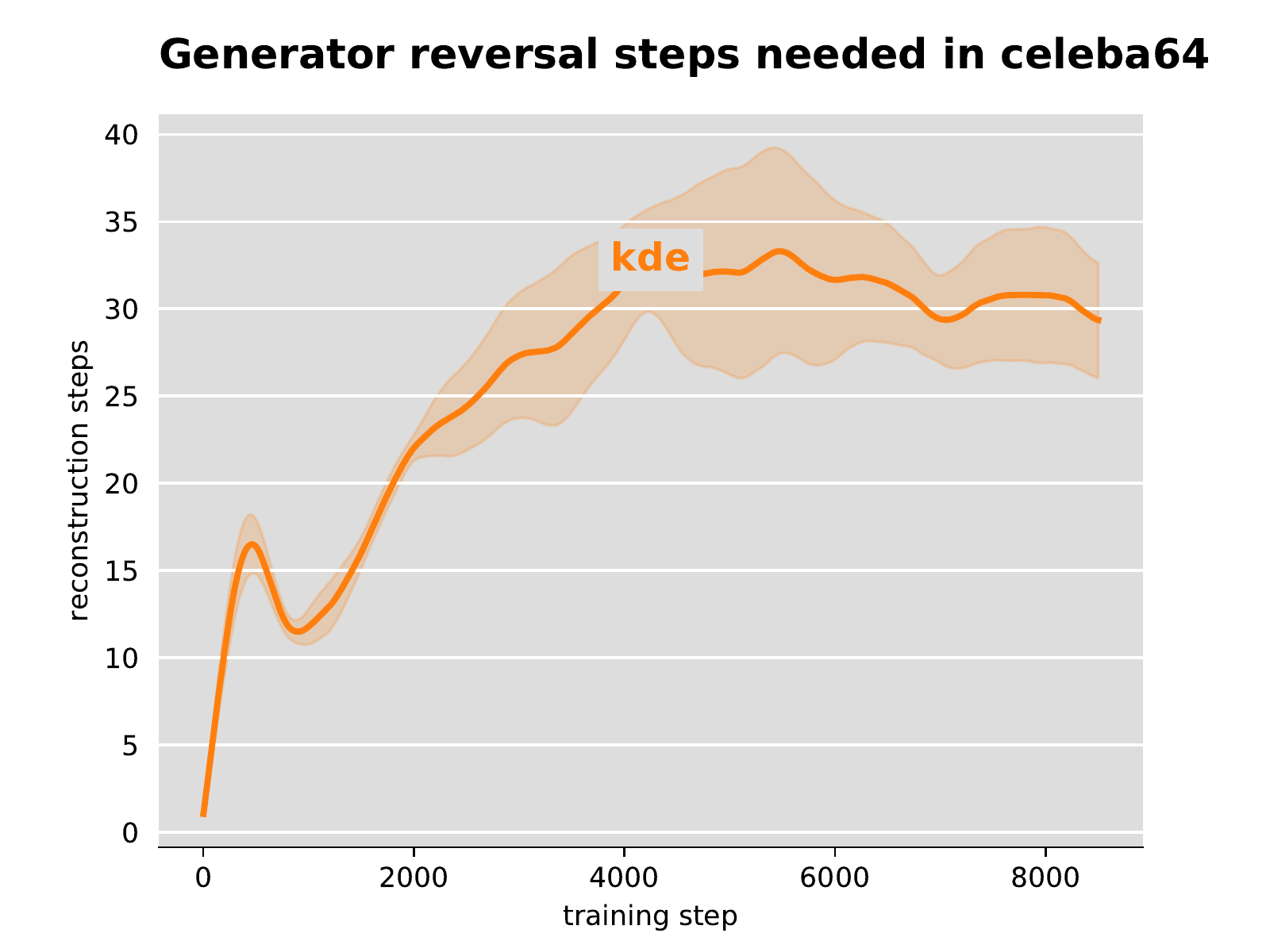}
}

\caption{\small{\it 
    Top row: Test set inception score over the course of the training procedure. Bottom row: The number of reconstruction steps taken by the generator reversal process in the inner loop until the MMD threshold is reached. Shaded areas denote one standard deviation around the mean, averaged over five training runs.}}
\label{fig:kde}
\end{center}
\end{figure*} 

\begin{center}
\begin{figure}[t!]
\foreach \ds in \datasetsa {
\begin{subfigure}[p]{0.49\textwidth}
    \includegraphics[width=1.0\columnwidth]{exp/prog/prog_total_\ds_0}
    \caption{\ds}
    \label{fig:prog:\ds}
\end{subfigure}\quad
}
\caption{\small{{\it Image samples from the generative model at the beginning of training. Vanilla DCGAN samples on the left, our KDE GAN on the right. Each row is sampled after 50 steps of training, starting at step 100.}}}
\label{fig:samples_progress}
\end{figure} 
\end{center}

\begin{center}
\begin{figure}[t!]
\foreach \ds in \datasetsa {
\begin{subfigure}[p]{0.22\textwidth}
    \includegraphics[width=1.1\columnwidth]{exp/kde/kde_\ds_total_2}
    \caption{\ds}
    \label{fig:final:\ds}
\end{subfigure}\quad
}
    \caption{\small{{\it Dataset images (left) and samples from fully trained vanilla DCGAN (middle) and KDE GAN (right).}}}
\label{fig:samples_full_model}
\end{figure}
\end{center}

\begin{figure*}[t!]
\begin{center}
\centerline{
    \includegraphics[width=1.05\columnwidth]{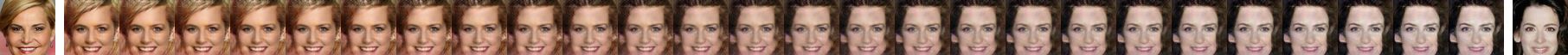}
}
    \vspace*{0.05cm}
\centerline{
    \includegraphics[width=1.05\columnwidth]{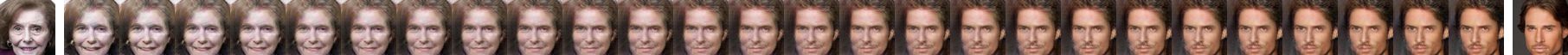}
}
    \vspace*{0.05cm}
\centerline{
    \includegraphics[width=1.05\columnwidth]{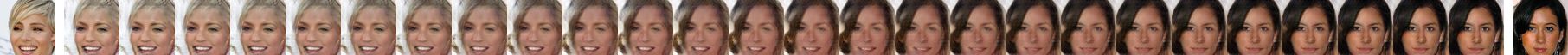}
}
    \vspace*{0.05cm}
\centerline{
    \includegraphics[width=1.05\columnwidth]{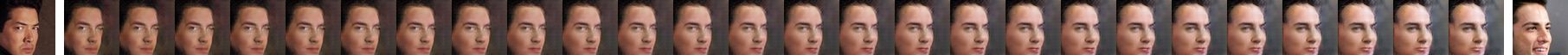}
}
    \vspace*{0.05cm}
\centerline{
    \includegraphics[width=1.05\columnwidth]{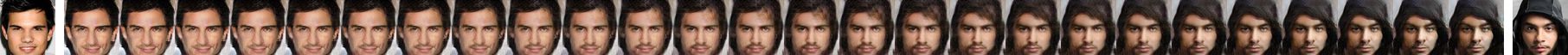}
}
    \caption{\small{{\it Manifold traversal with KDE GAN. We linearly interpolate between the latent codes of two given seeding images (far left and right). 
    }}}
\label{fig:traversal}
\end{center}
\end{figure*} 


\section{Related Work}

\paragraph{VAEs vs GANs.}
Another example of a popular technique to train a neural generative model is the Variational Auto-encoder approach proposed by~\cite{Kingma:2013tz, Rezende:2014vm} which explicitly models the data distribution in the latent space and learn its parameters using an encoder neural network to minimize the reconstruction error. A common criticism against VAEs is that they tend to generate blurry samples, a problem which is typically alleviated by GANs.

However, GANs lack an efficient inference mechanism. This problem was addressed by some recent work, e.g.~\cite{Che:2016wq, Dumoulin:2016td, donahue2016adversarial} which focus on training a separate \emph{encoder} network in order to map a sample from the data space to its latent representation. Our goal is however different as our procedure is used to estimate a more flexible prior over the latent space. The importance of using an appropriate prior for GAN models has also been discussed in~\cite{han2016alternating} which suggested to infer the continuous latent factors in order to maximize the data log-likelihood. However this approach still makes use of a simple fixed prior distribution over the latent factors and does not use the inferred latent factors to construct a prior as suggested by our approach.

As discussed throughout this paper, standard GAN training requires to strike a good balance between the two networks. The work of~\cite{Metz:2016wg} introduced a method to stabilize training by defining the generator objective with respect to an unrolled optimization of the discriminator. They show that a gradient update procedure can be effectively included into the back-propagation step of another gradient update procedure. Although our approach is fundamentally different, it does also stabilize training and can also be implemented end-to-end as part of the back-propagation step.

\paragraph{MMD.} The Maximum Mean Discrepancy measure~\cite{Gretton:2012wt} is a particular instance of an \emph{integral probability metric}~\cite{muller1997integral}, a class of metrics on probability measures that includes a wide variety of known divergences, such as the Kolmogorov Distance, the Total Variation Distance and the Wasserstein Distance. For an excellent discussion of the relation of MMD to other integral probability metrics as well as other commonly used divergence measures, such as the Kullback-Leibler Divergence, we refer the reader to~\cite{Sriperumbudur:2010wp}. 
The connection between MMD and GANs have also been explored in the literature. Both \cite{dziugaite2015training} and \cite{li2015generative} proposed an approximation to adversarial learning that replaces the discriminator with the MMD criterion in the data space.

\section{Conclusion}

We presented a novel approach to estimate a flexible prior over the latent codes
given by a generator $G_\phi$. This is achieved through a reversal technique that continually reconstruct latent representations of data samples and use these reconstructions to construct a prior over the latent codes. We empirically demonstrated that this reversal technique yields several benefits including: more powerful generative models, better modeling of latent structure and explicit control of the degree of generalization.


\bibliography{nips_2017}
\bibliographystyle{plain}


\newpage
\appendix
\label{section:app}

\section{Detailed Experiment Setup.}

Our experimental setup closely follows popular setups in GAN research in order to facilitate reproducibility and enable qualitative comparisons of results.
Our network architectures are as follows:

The generator samples from a latent space of dimension 200, which is fed through a padded deconvolution to form an initial intermediate representation of $4\times 4\times 512$, which is then fed through three layers of deconvolutions with 256, 128 and 64 filters, followed by a last deconvolution to get to the desired output size and channels.
For the CelebA dataset, which we cropped and rescaled to image size $64\times 64$, we employ an additional deconvolutional layer with 512 filters between the first and second layer.

The discriminator consists of three layers of convolutional layers with 256, 128 and 64 filters, followed by a fully connected layer and a sigmoid classifier.
Again, for the CelebA dataset, the discriminator is augmented by an additional convolutional layer using 512 filters.

Both the generator and the discriminator use $4\times 4$ filters with a stride of $2$ in order to up- and downscale the representations, respectively.
The generator employs ReLU non-linearities, except for the last layer, which uses hyperbolic tangent.
The discriminator uses Leaky ReLU non-linearities with a leak of $0.2$, which is standard in the GAN literature.

We use RMSProp\cite{tieleman2012lecture} with a step size of $0.0003$ and mini-batches of size 100 for optimization for both the generator and discriminator.
Both networks are updated once per iteration.

For the generator reversal process, we use a learning rate of $1.0$ and an MMD threshold of $0.0001$ with a kernel bandwidth of $0.0001$.
The initial noise vectors are sampled from a normal distribution with $\sigma = 0.0001$.

We train until we can no longer see any significant qualitative imrovement in the generated images or any quantitative improvement in the inception score.
This amounts to 3 epochs on MNIST, 10 epochs on SVHN, 50 epochs on CIFAR10 and 5 epochs on CelebA.

We crop the images of the CelebA dataset to a size of $118\times 118$ pixels, after which we resize them to $64\times 64$ pixels.
Our images from MNIST, SVHN and CIFAR10 retain their original sizes of $28\times 28$, $32\times 32$ and $32\times 32$ pixels, respectively.

As for the effect of generator reversal on wall clock time, in our TensorFlow implementation, an iteration performing 5, 20 or 50 generator reversal steps takes about 1.9, 4.9 or 9.8 times as long as an iteration without generator reversal, but there is still a lot of room for optimization.

\section{Additional Training Metrics}

\begin{figure*}[h!]
\begin{center}
\centerline{
    \includegraphics[width=0.25\columnwidth]{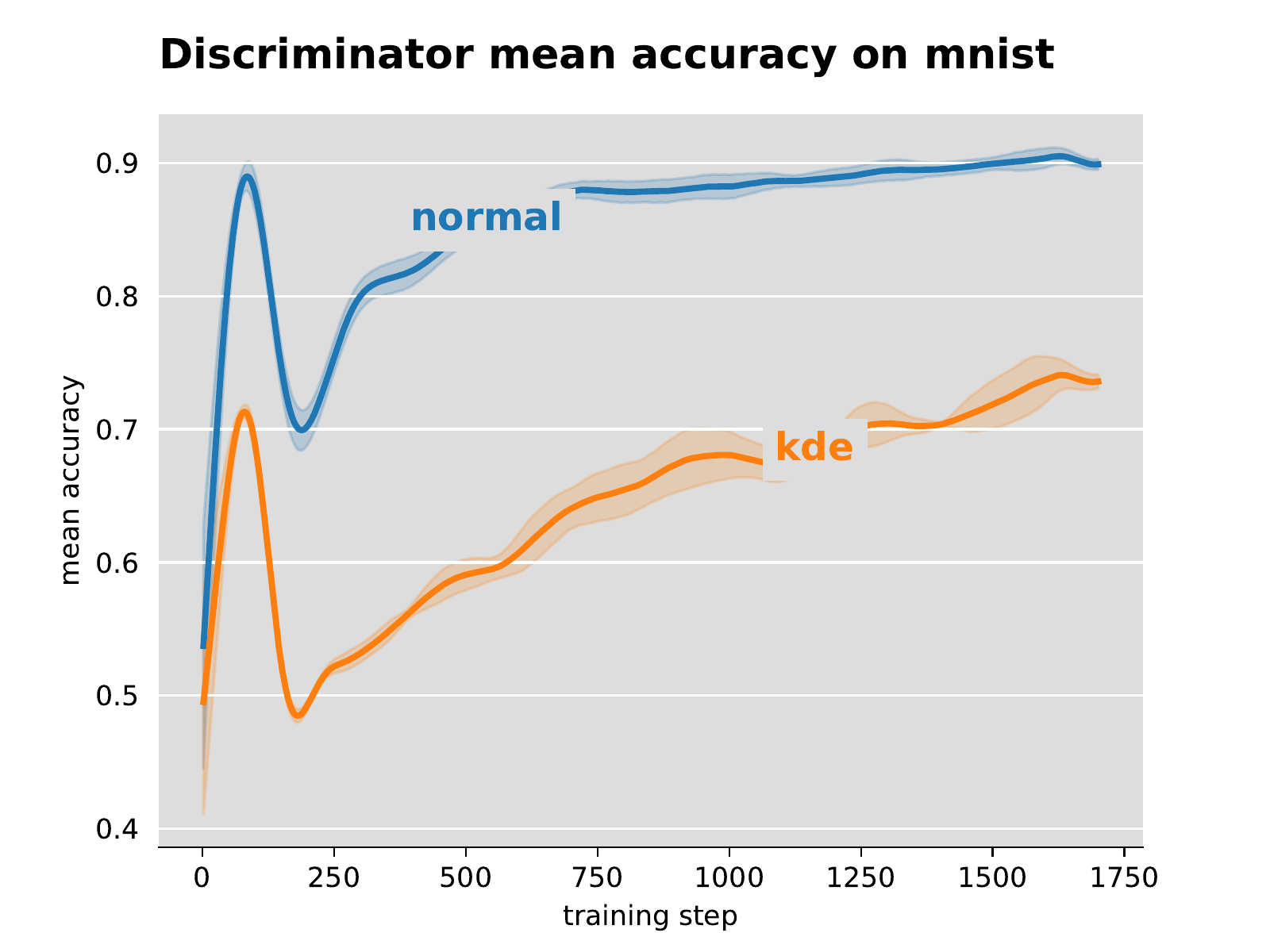}
    \includegraphics[width=0.25\columnwidth]{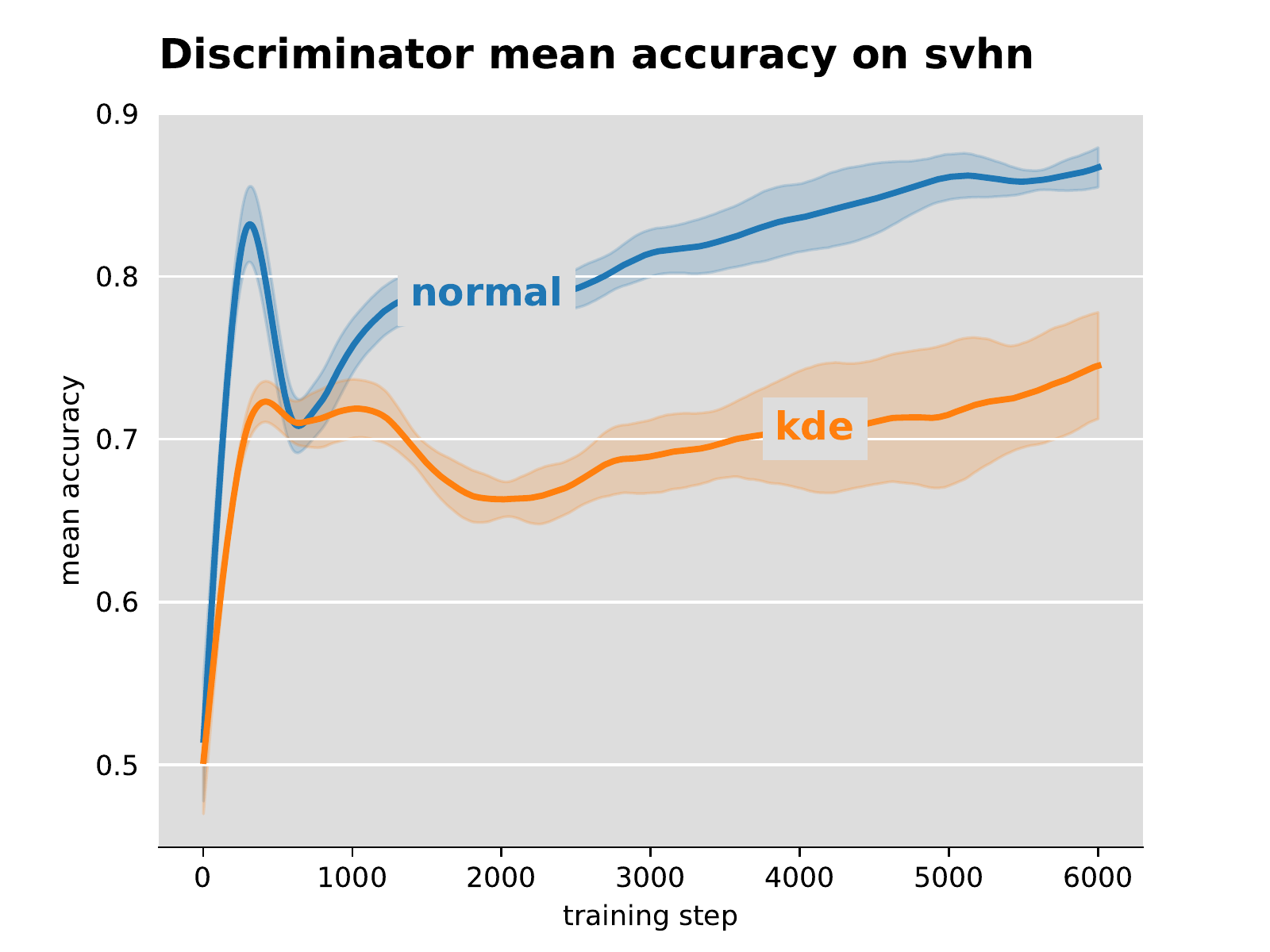}
    \includegraphics[width=0.25\columnwidth]{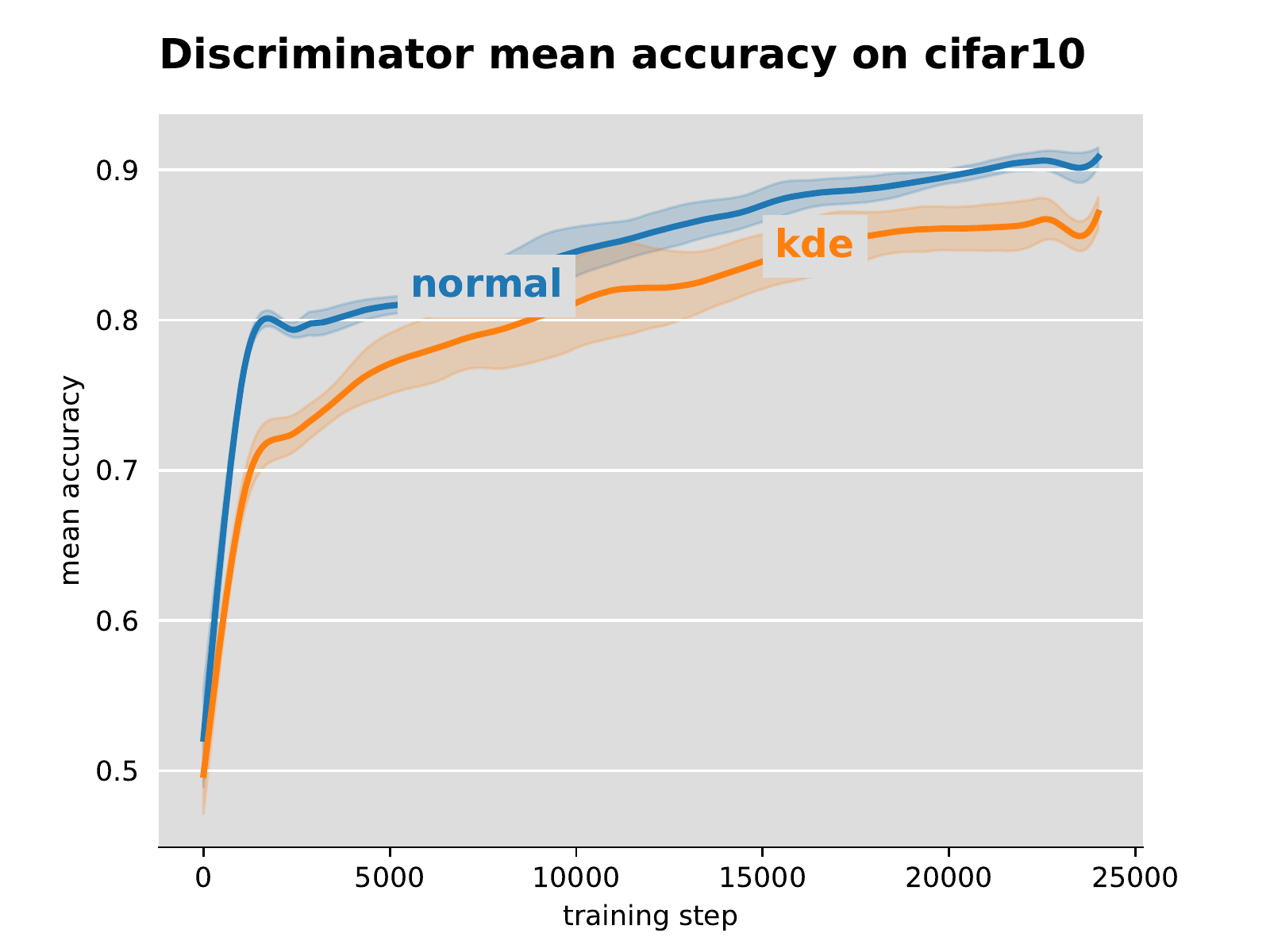}
    \includegraphics[width=0.25\columnwidth]{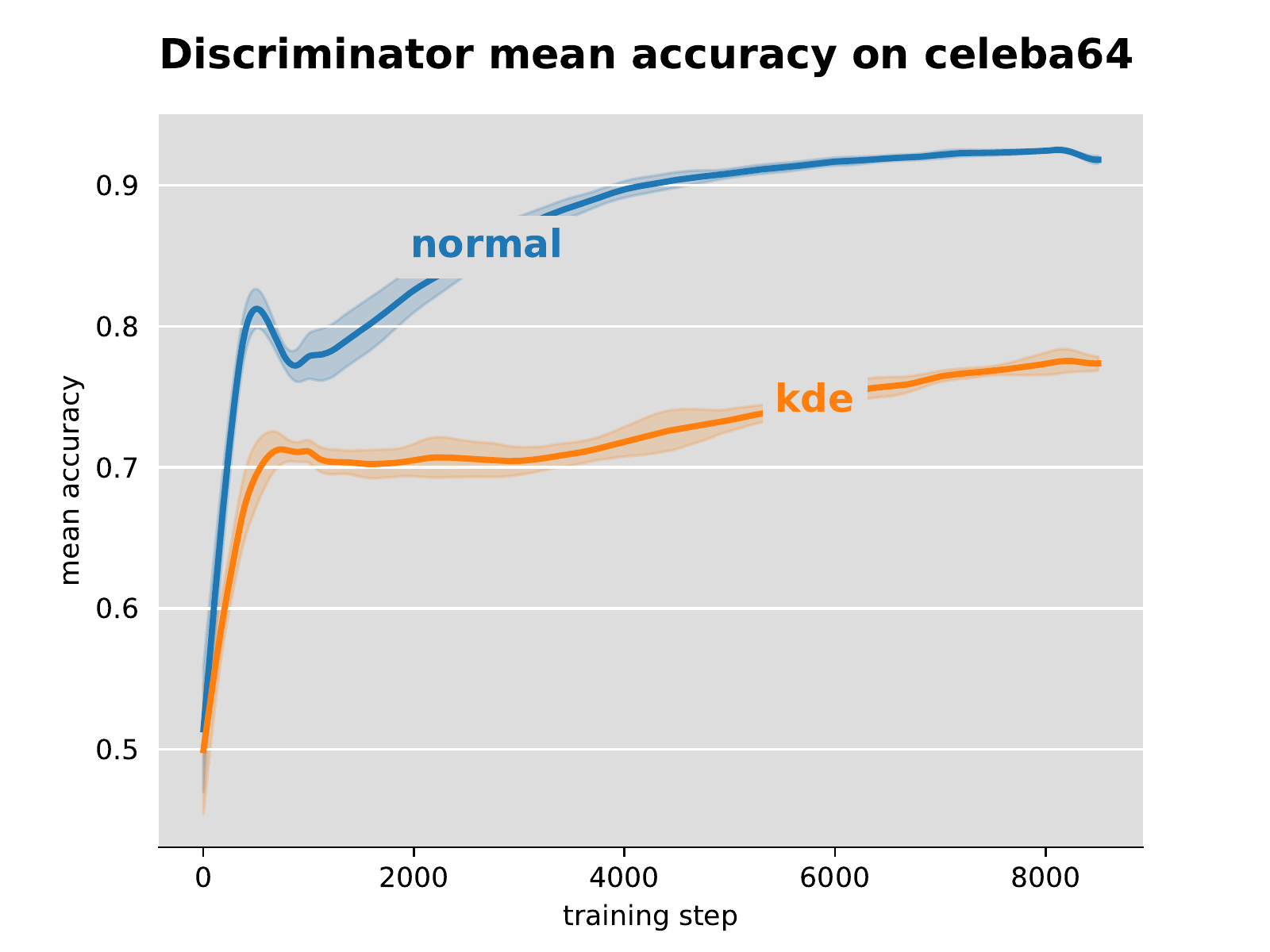}
}
\centerline{
    \includegraphics[width=0.25\columnwidth]{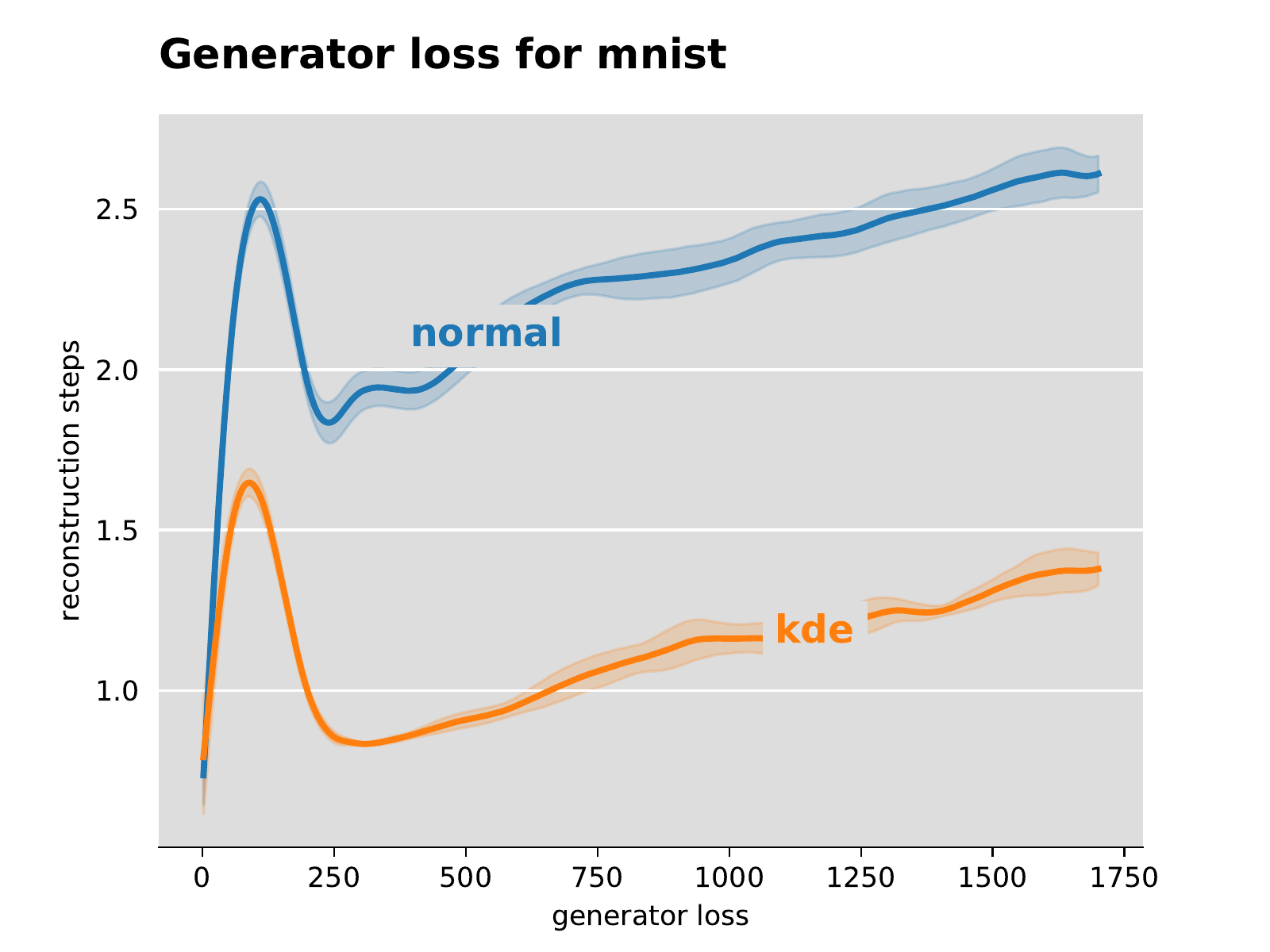}
    \includegraphics[width=0.25\columnwidth]{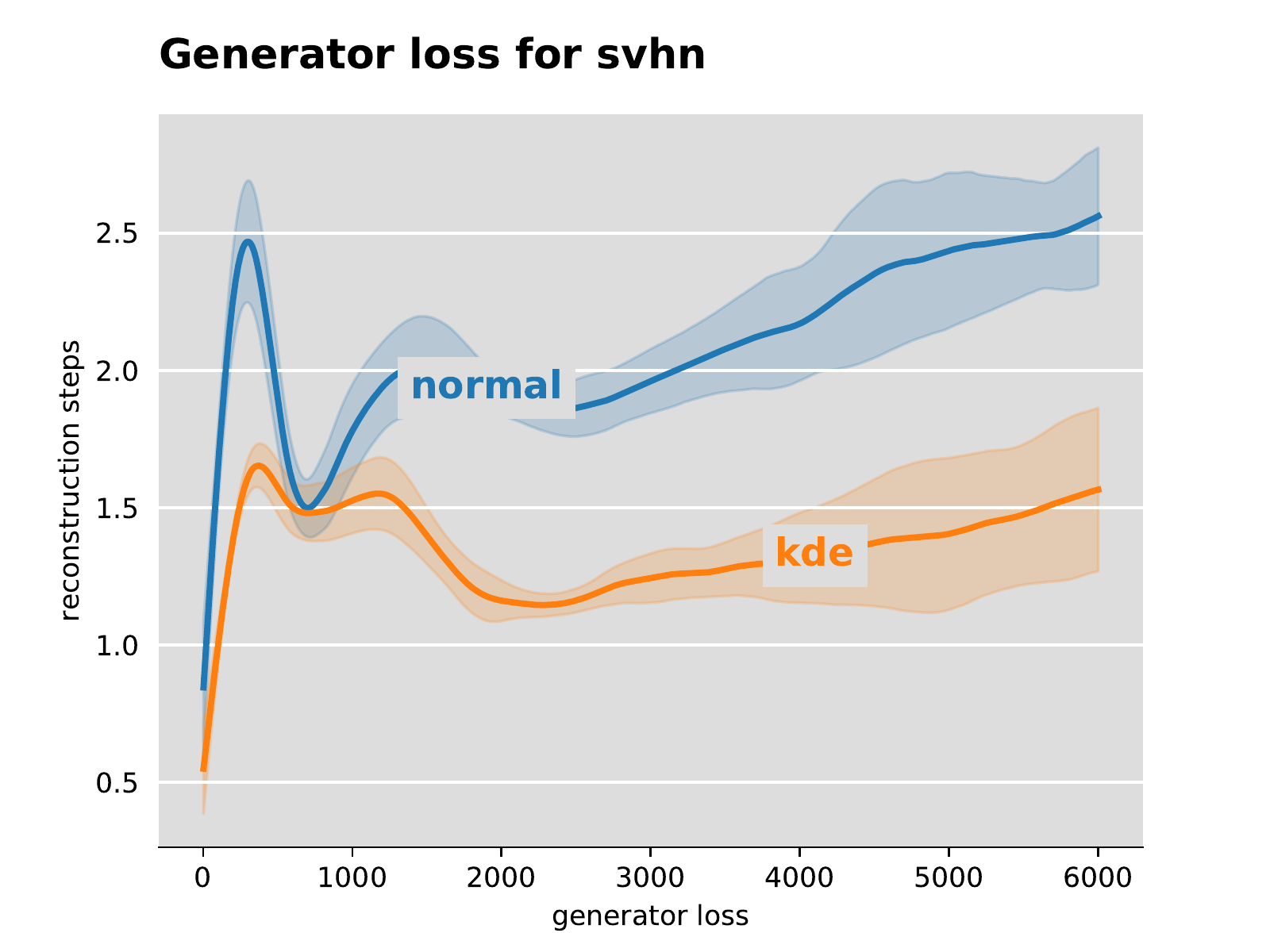}
    \includegraphics[width=0.25\columnwidth]{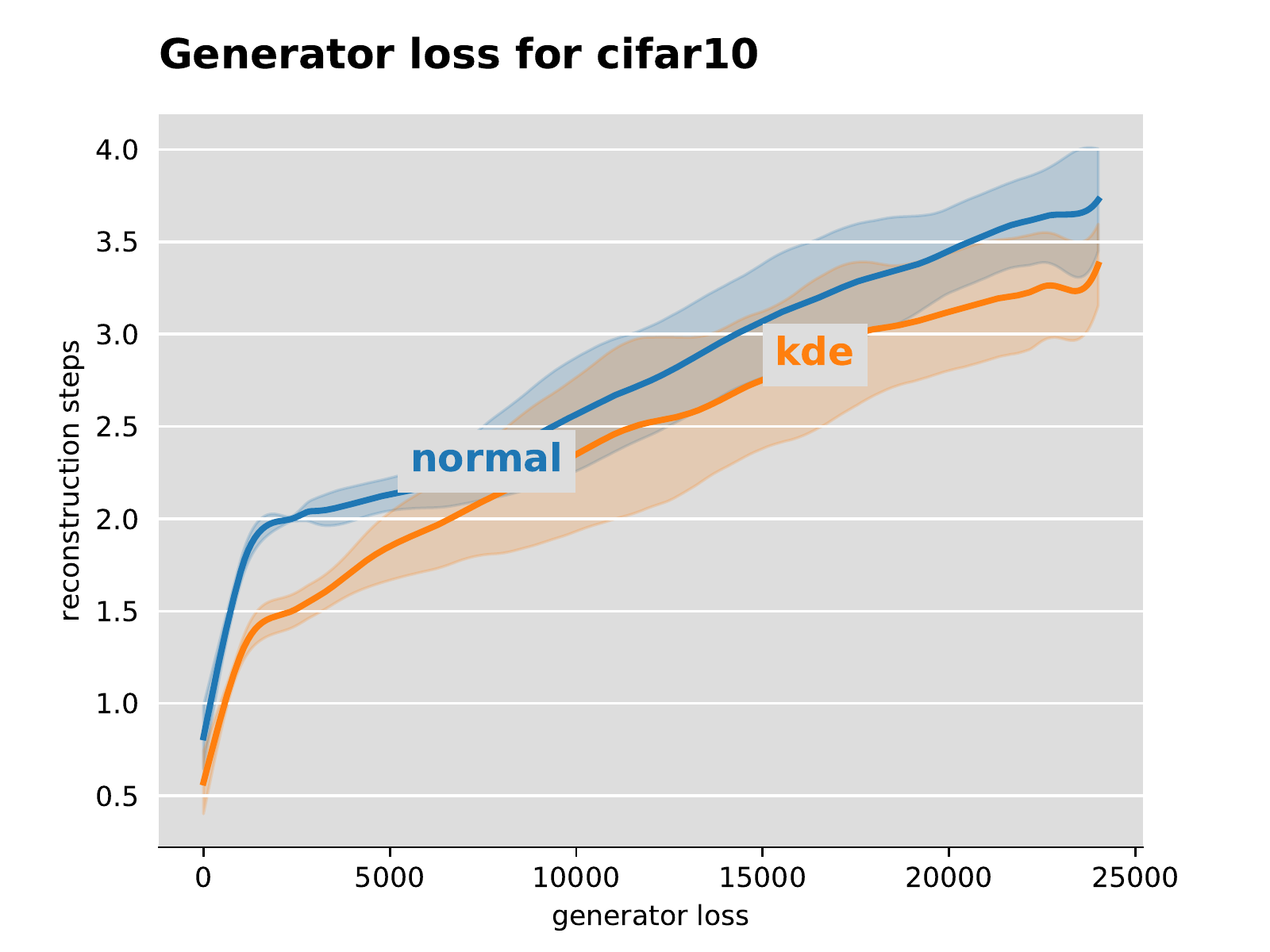}
    \includegraphics[width=0.25\columnwidth]{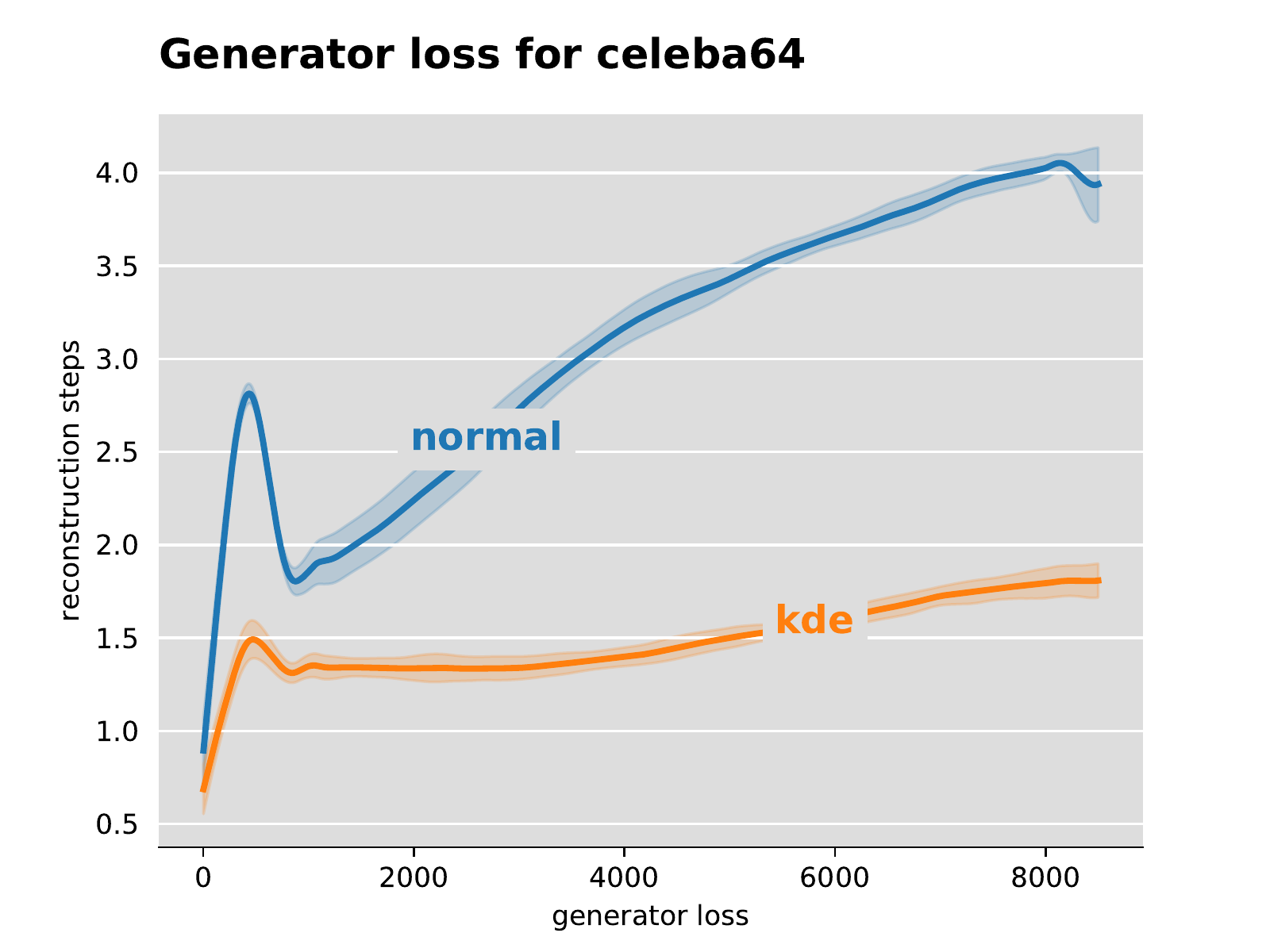}
}

\caption{\small{\it 
    Top row: Discriminator mean accuracy over the course of the training procedure. Bottom row: Training loss of the generator as training progresses. Shaded areas denote one standard deviation around the mean over five repetitions.}}
\label{fig:tmetrics}
\end{center}
\end{figure*} 

Figure~\ref{fig:tmetrics} shows the training loss of the generator as well as the mean accuracy of the discriminator over the course of training.
Both metrics indicate that our training procedure improves the generator, such that it achieves a better loss as well as manages to fool the discriminator more easily.

\section{Additional Evaluation Metrics}

\paragraph{Nearest-neighbor test.} We evaluate the generalization properties of our model using the common procedure of examining nearest-neighbors in the data space~\cite{Goodfellow:2014td, Metz:2016wg}. Specifically, we take samples from the training distribution and compute the nearest-neighbor images in pixel space from the training data. We then report the average pairwise distance between each sample and its neighbors. The results shown in Table~\ref{tbl:nndist} show that the distance is the same as the standard GAN approach which indicates that our model has similar generalization properties and thus does not simply overfit to the training data.

\begin{table}[h!]
    \centering
\begin{tabular}{l | r r}
    & normal & kde \\
    \hline
    MNIST & $4.56 \pm 0.34$ & $4.26 \pm 0.37$ \\
    SVHN & $9.93 \pm 0.12$ & $9.90 \pm 0.13$ \\
    CIFAR10 & $10.06 \pm 0.40$ & $10.11 \pm 0.12$ \\
\end{tabular}
    \vspace*{.3cm}
\caption{Mean pixel-wise euclidean distance (and standard deviation) from the generated samples to their nearest neighbor in the training set.}
\label{tbl:nndist}
\end{table}

\paragraph{Holdout Likelihood.} We sample a set of images from a holdout test dataset and compute their latent representation using Generator Reversal. We then evaluate the likelihood of the latent codes under the distribution obtained from KDE over the latent space. The results shown in Figure~\ref{fig:bw} demonstrate that our approach achieves a very similar holdout likelihood compared to the GAN model.
The fact that data that has not been seen during training is assigned substantial likelihood is evidence for the generalization capacity of our approach and indicates that it is not a simple memorization of the training data.
An interesting direction for future work would be to investigate the role of the kernel bandwidth parameter in the tradeoff between memorization and generalization.

\begin{figure*}[h!]
\begin{center}
\centerline{
    \includegraphics[width=0.35\columnwidth]{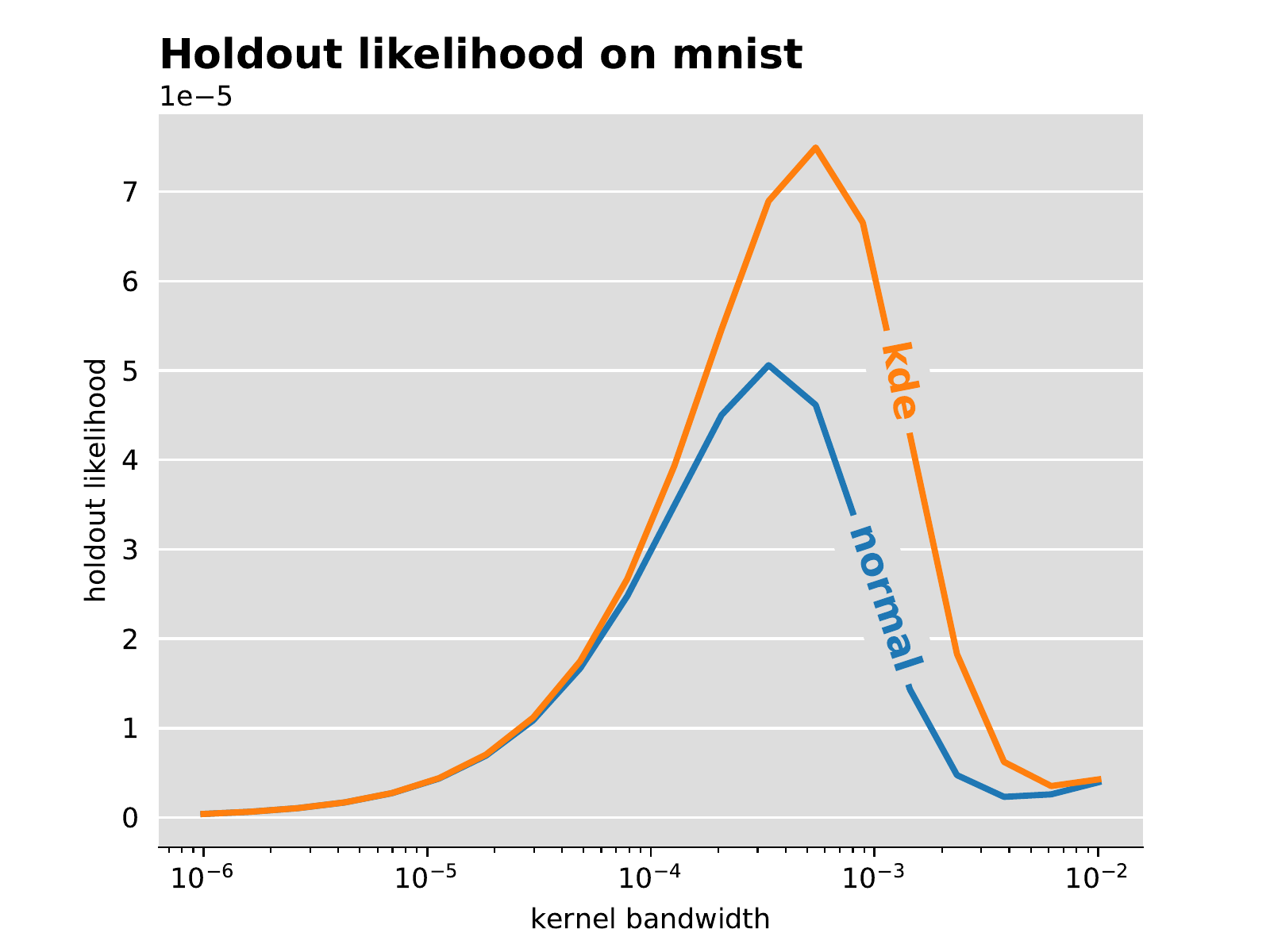}
    \includegraphics[width=0.35\columnwidth]{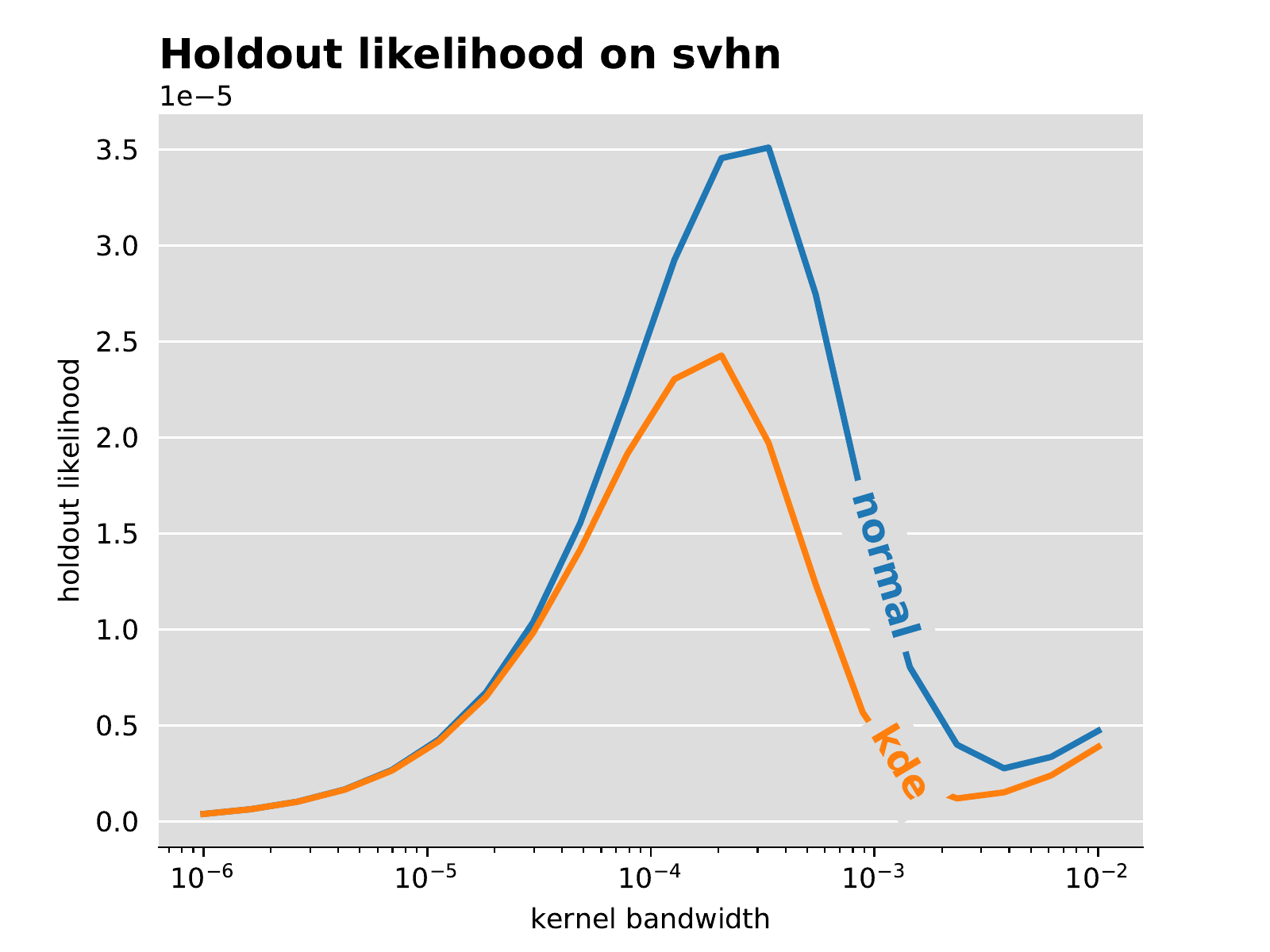}
    \includegraphics[width=0.35\columnwidth]{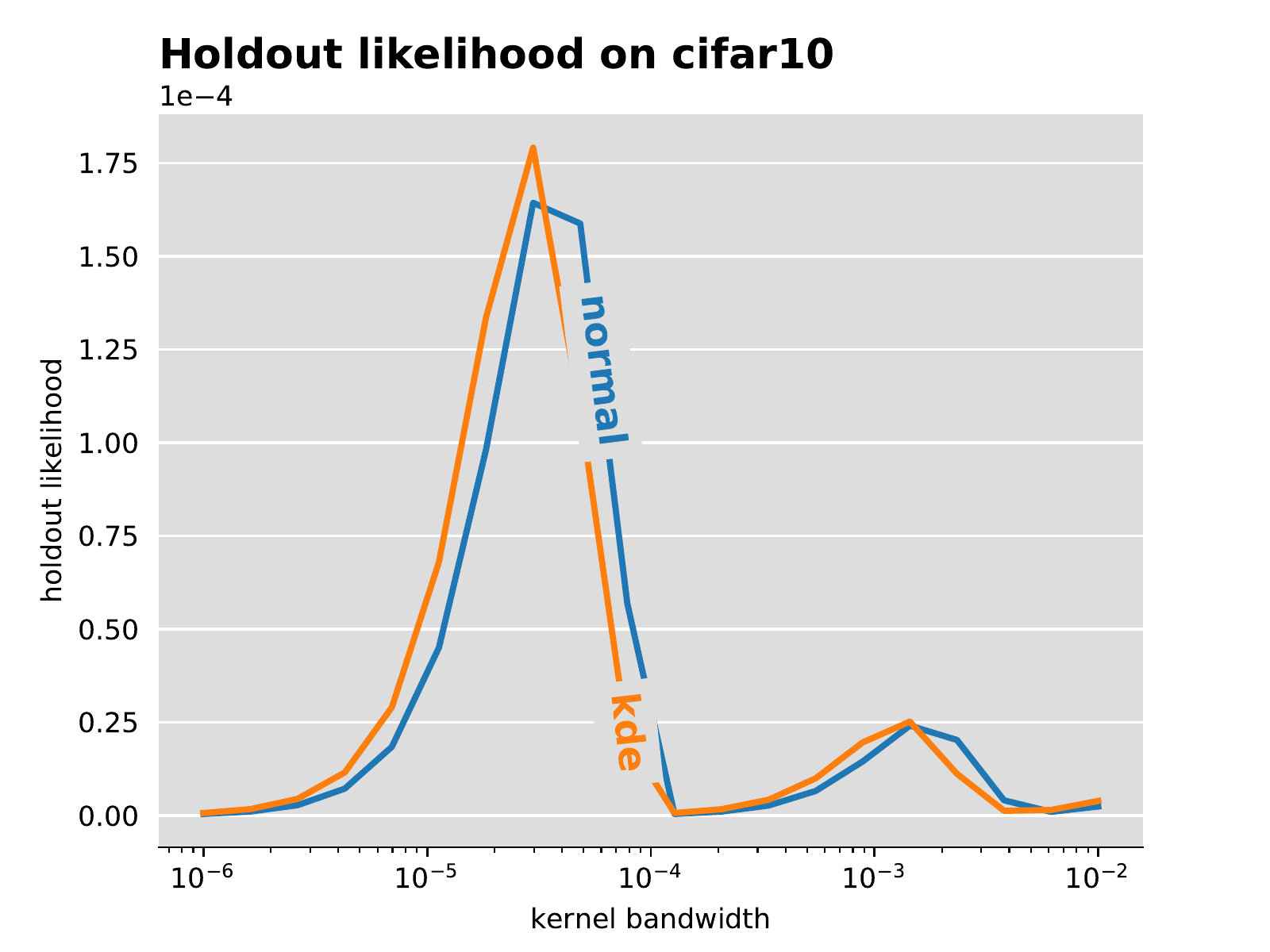}
}
\caption{\small{\it Holdout likelihood of a test set on MNIST, SVHN and CIFAR10, computed in the latent code space, while varying the kernel bandwidth used in the KDE computation.}}
\label{fig:bw}
\end{center}
\end{figure*}

\clearpage
\section{Additional Experimental Results}

\paragraph{Recovery of dropped modes.}

We create the \emph{MNIST1000} dataset by randomly concatenating 3 digits from the MNIST dataset.
This yields a dataset with 1000 modes.
To make the task more challenging, we bias sampling in favor of the high digits in a linear fashion (a 9 is 20 times more likely to be sampled than a 1).
We would like to compare the different training procedures and their ability to recover from mode dropping.
We simulate mode dropping by first restricting the dataset to samples that only consist of digits 6, 7, 8 and 9.
We train the model until convergence on this limited dataset, then we provide the full dataset and continue training, again until convergence.
The final inception score is a measure of how well each model is able to recover previously dropped modes.
We compare Generator Reversal to Vanilla GAN as well as to BiGAN\cite{donahue2016adversarial}, since we hypothesize that the inclusion of an encoder network might help recover from mode dropping.
The results are shown in Table~\ref{tbl:moderec}.
As can be seen, all models are able to recover from the initial mode dropping, but both Vanilla GAN and BiGAN are not able to reach the same score compared to when they are trained on the full dataset from the beginning.
This is an indication that some of the dropped modes have not been recovered.
In comparison, Generator Reversal is able to reach a comparable inception score as when trained on the full dataset, which indicates that it can successfully recover from mode dropping.

\begin{table}[h!]
    \centering
\begin{tabular}{l | r r r}
    & Restricted Dataset & Full Dataset after restriction  & Full Dataset\\
    \hline
    Vanilla GAN & 107 & 196 & 248\\
    BiGAN & 102 & 200 & 265 \\
    Generator Reversal & 99 & 289 & 291 \\
\end{tabular}
    \vspace*{.3cm}
\caption{Inception scores of GAN, BiGAN and GAN trained using Generator Reversal on the MNIST1000 dataset after restriction to only 3-digit numbers made up of digits 6, 7, 8 and 9. Scores on the full dataset are provided for comparison.}
\label{tbl:moderec}
\end{table}

\clearpage
\paragraph{More Trained Samples.}

Figures~\ref{fig:samples_full_model_0},~\ref{fig:samples_full_model_1},~\ref{fig:samples_full_model_2},~\ref{fig:samples_full_model_3} and~\ref{fig:samples_full_model_4} show more samples from the fully trained models of vanilla DCGAN and our KDE GAN.

\def\numfinal{0,1,2,3,4}
\foreach \nfinal in \numfinal {
\begin{center}
\begin{figure}[h!]
\foreach \ds in \datasetsa {
\begin{subfigure}[p]{0.22\textwidth}
    \includegraphics[width=1.1\columnwidth]{exp/kde/kde_\ds_total_\nfinal}
    \caption{\ds}
    \label{fig:final:\ds_\nfinal}
\end{subfigure}\quad
}
    \caption{\small{{\it Dataset images (left) and samples from fully trained vanilla DCGAN (middle) and KDE GAN (right).}}}
\label{fig:samples_full_model_\nfinal}
\end{figure}
\end{center}
}

\clearpage

\paragraph{More Beginning-Of-Training Samples.}

Figures~\ref{fig:samples_progress_0},~\ref{fig:samples_progress_1},~\ref{fig:samples_progress_2} and~\ref{fig:samples_progress_3} show more samples from the fully trained models of vanilla DCGAN and our KDE GAN.

\def\numprog{0,1,2,3}
\foreach \nprog in \numprog {
\begin{center}
\begin{figure}[h!]
\foreach \ds in \datasetsa {
\begin{subfigure}[p]{0.49\textwidth}
    \includegraphics[width=1.0\columnwidth]{exp/prog/prog_total_\ds_\nprog}
    \caption{\ds}
    \label{fig:prog:\ds_\nprog}
\end{subfigure}\quad
}
\caption{\small{{\it Image samples from the generative model at the beginning of training. Vanilla DCGAN samples on the left, our KDE GAN on the right. Each row is sampled after 50 steps of training, starting at step 100.}}}
\label{fig:samples_progress_\nprog}
\end{figure}
\end{center}
}

\clearpage

\paragraph{More Manifold Traversals.}

We perform more manifold traversal experiments and show the results in Figure~\ref{fig:traversal:kde}.

\def\numtrav{8,9,10,11,12,13,14,15,16,17,18,19,20,21,22,23,24,25,26,27,28,29,30,31,33,34,35,36,37,38,39,40}
\begin{figure*}[h!]
\begin{center}
\foreach \ntrav in \numtrav{
\centerline{
    \includegraphics[width=1.05\columnwidth]{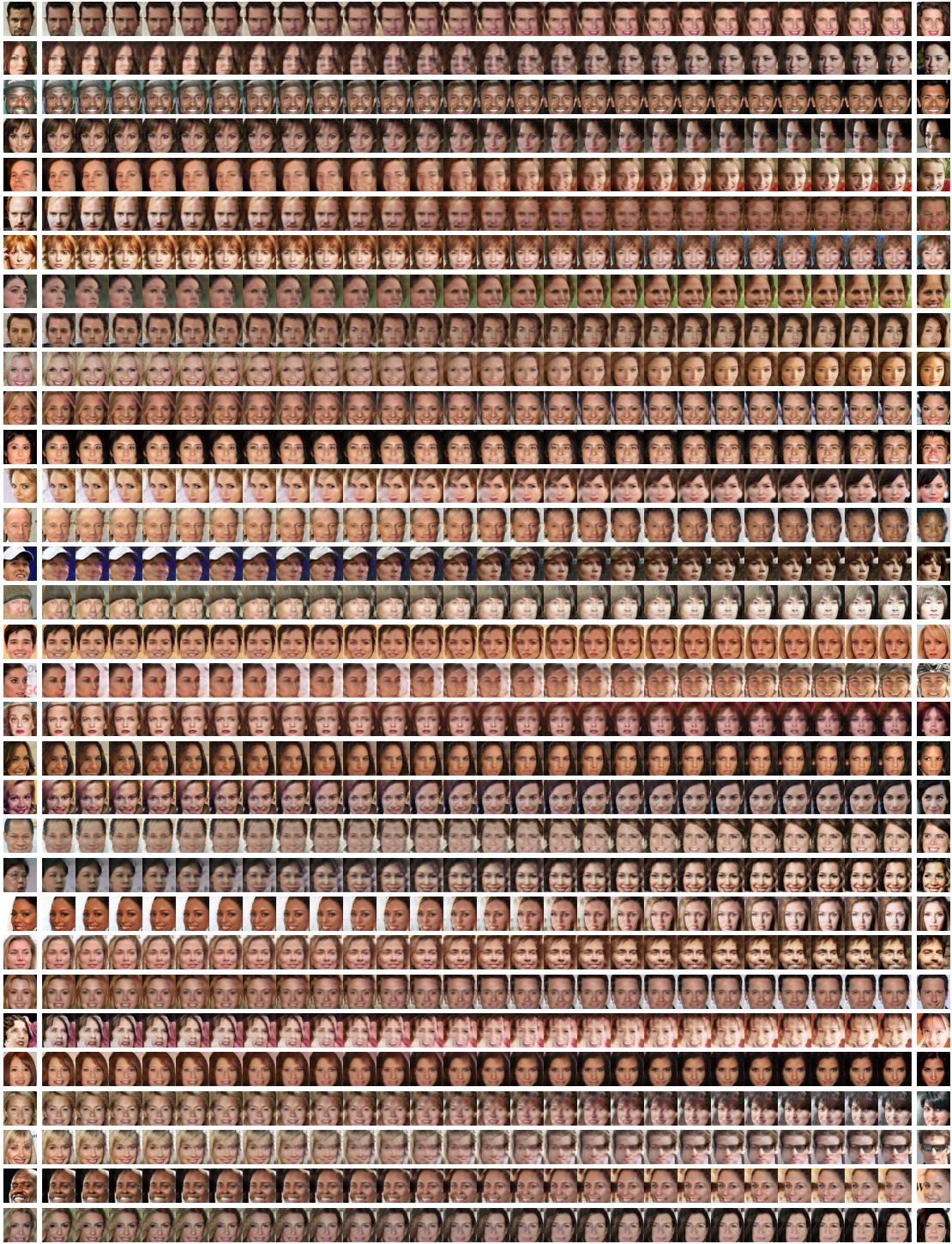}
}
    \vspace*{0.05cm}
}
    \caption{\small{{\it Manifold traversal with KDE GAN. We linearly interpolate between the latent codes of two given seeding images (far left and right). 
    }}}
\label{fig:traversal:kde}
\end{center}
\end{figure*} 

\clearpage

\paragraph{Latent Neighborhood Exploration.}

We now perform an experiment similar to manifold traversal but targeted at checking the local structure of the latent manifold around a datapoint. This is achieved by picking a random seeding image $x_0$ from the dataset, obtaining its latent representation $z_0$ via generator reversal and generating images by sampling from increasing neighborhood sizes $r>0$ centered at $\x_0$, i.e. $\z \sim B_r(\z_0) = \{ \y | \| \y - \z_0 \| < r\}$. The results give more evidence of the ability of the KDE GAN model to learn a neighborhood structure. Displayed are images for low $r$ in Figure~\ref{fig:wiggle:kde:low}, medium $r$ in Figure~\ref{fig:wiggle:kde:med}, high $r$ in Figure~\ref{fig:wiggle:kde:high} and overly high $r$ in Figure~\ref{fig:wiggle:kde:comic}.

\begin{figure*}[h!]
\begin{center}
\centerline{
    \includegraphics[width=1.0\columnwidth]{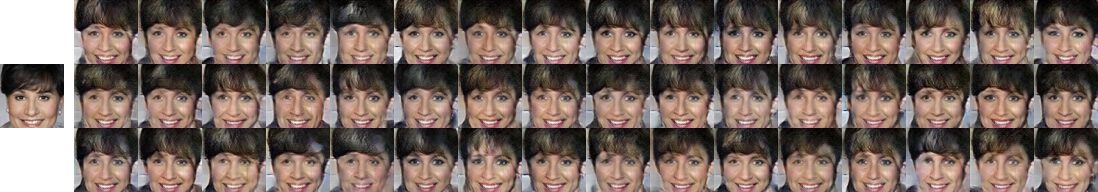}
}
    \vspace*{0.1cm}
\centerline{
    \includegraphics[width=1.0\columnwidth]{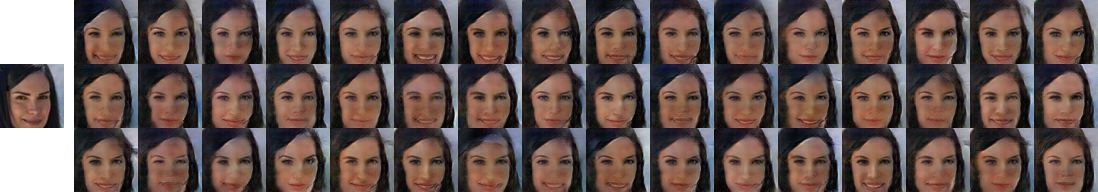}
}
    \vspace*{0.1cm}
\centerline{
    \includegraphics[width=1.0\columnwidth]{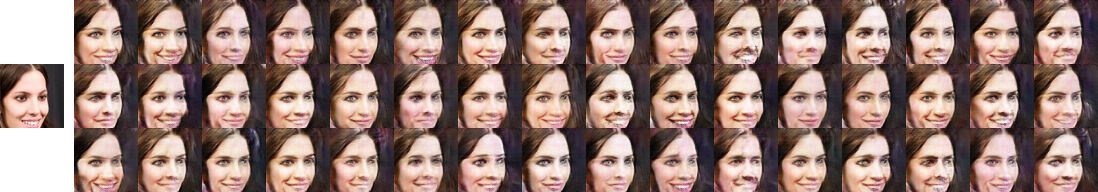}
}
    \vspace*{0.1cm}
\centerline{
    \includegraphics[width=1.0\columnwidth]{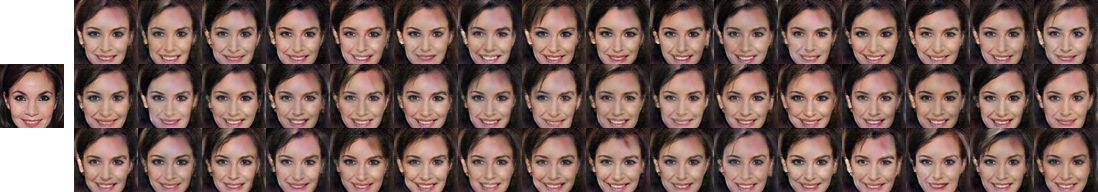}
}
    \caption{\small{\it{KDE GAN neighborhood exploration for a small neighborhood around a seeding image (show on the left).}}}
\label{fig:wiggle:kde:low}
\end{center}
\end{figure*} 

\begin{figure*}[h!]
\begin{center}
\centerline{
    \includegraphics[width=1.0\columnwidth]{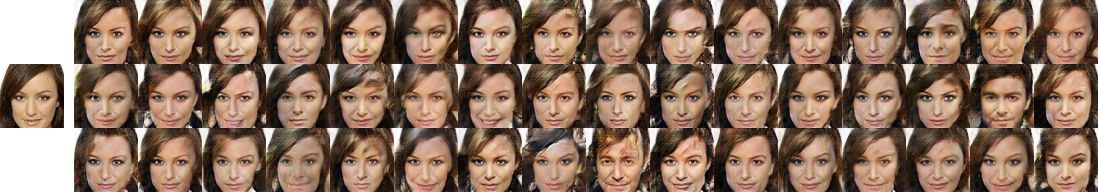}
}
\centerline{
    \includegraphics[width=1.0\columnwidth]{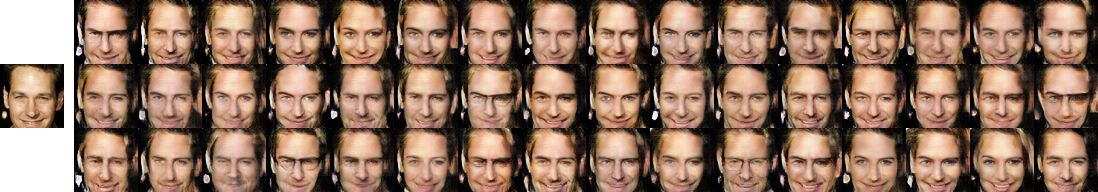}
}
\centerline{
    \includegraphics[width=1.0\columnwidth]{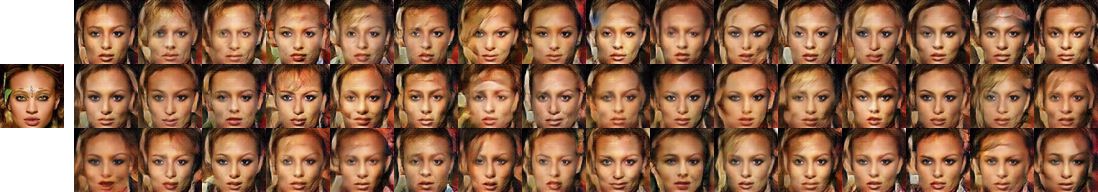}
}
\centerline{
    \includegraphics[width=1.0\columnwidth]{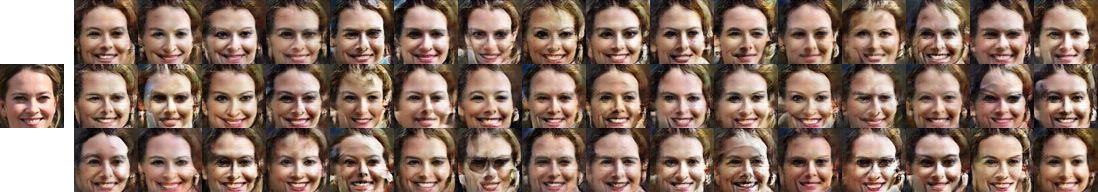}
}
    \caption{\small{\it{KDE GAN neighborhood exploration for a medium sized neighborhood around a seeding image (show on the left).}}}
\label{fig:wiggle:kde:med}
\end{center}
\end{figure*} 

\begin{figure*}[h!]
\begin{center}
\centerline{
    \includegraphics[width=1.0\columnwidth]{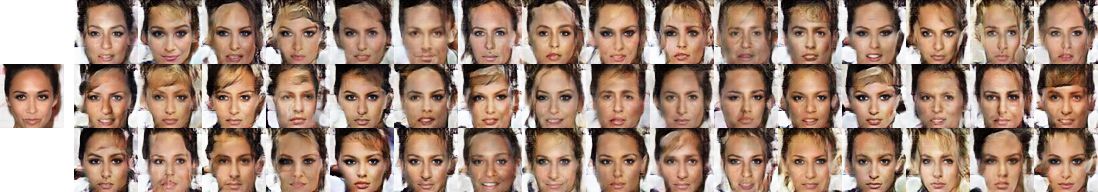}
}
\centerline{
    \includegraphics[width=1.0\columnwidth]{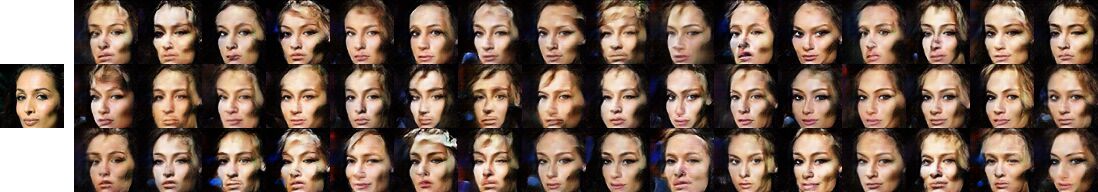}
}
\centerline{
    \includegraphics[width=1.0\columnwidth]{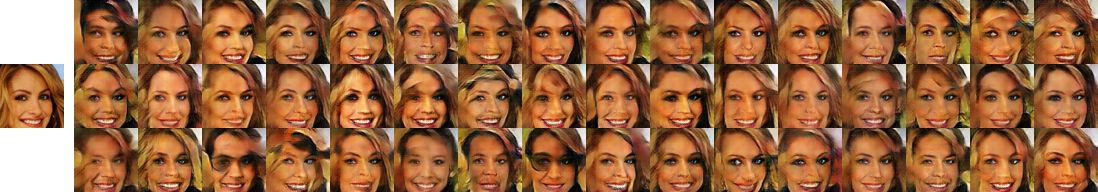}
}
\centerline{
    \includegraphics[width=1.0\columnwidth]{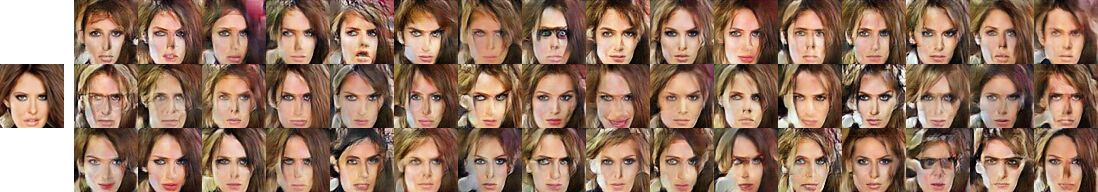}
}
    \caption{\small{\it{KDE GAN neighborhood exploration for a large neighborhood around a seeding image (show on the left).}}}
\label{fig:wiggle:kde:high}
\end{center}
\end{figure*} 

\begin{figure*}[h!]
\begin{center}
\centerline{
    \includegraphics[width=1.0\columnwidth]{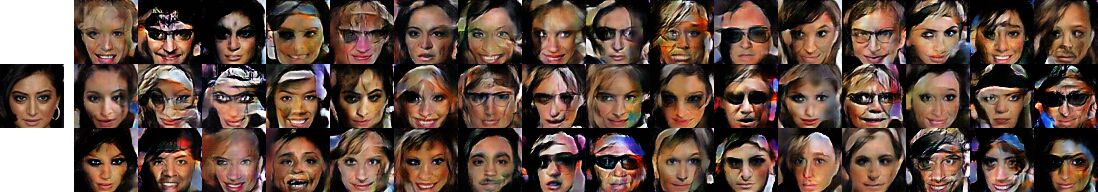}
}
\centerline{
    \includegraphics[width=1.0\columnwidth]{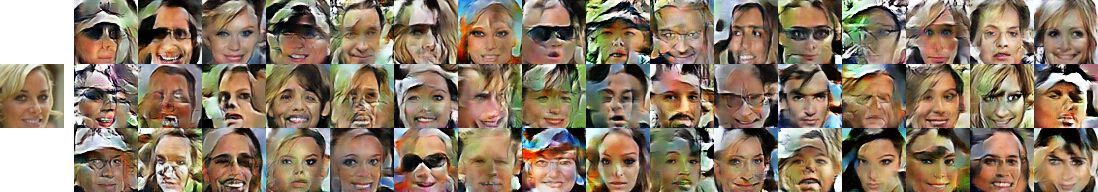}
}
\centerline{
    \includegraphics[width=1.0\columnwidth]{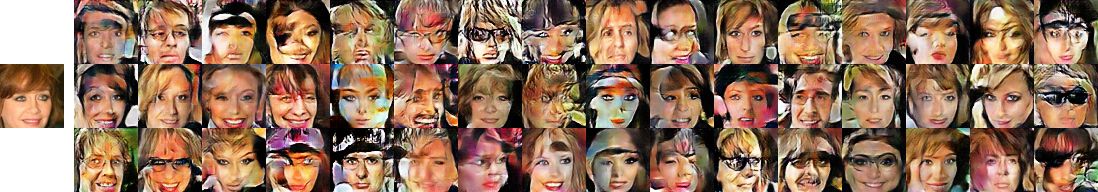}
}
\centerline{
    \includegraphics[width=1.0\columnwidth]{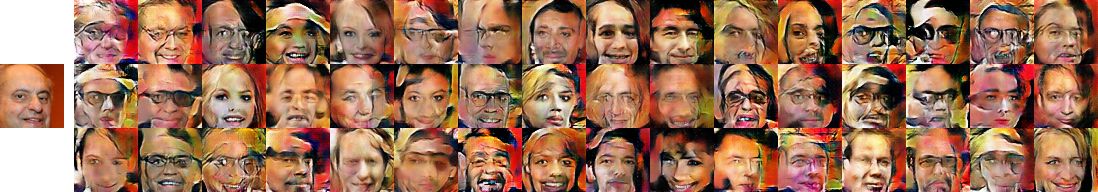}
}
    \caption{\small{\it{KDE GAN neighborhood exploration for an overly large neighborhood around a seeding image (show on the left).}}}
\label{fig:wiggle:kde:comic}
\end{center}
\end{figure*} 

\end{document}